\documentclass{article}

\usepackage{microtype}
\usepackage{graphicx}
\usepackage{subcaption}
\usepackage{booktabs} % for professional tables

\usepackage{hyperref}

\usepackage[accepted]{icml2026}

\usepackage{amsmath}
\usepackage{amsthm}
\usepackage{amssymb}
\usepackage{mathtools}
\usepackage{thm-restate}

\DeclareMathOperator*{\ntk}{ntk}
\DeclareMathOperator*{\bll}{bll}
\DeclareMathOperator*{\vecconcat}{vec}

\usepackage[capitalize,noabbrev]{cleveref}

\theoremstyle{plain}
\newtheorem{theorem}{Theorem}[section]
\newtheorem{proposition}[theorem]{Proposition}
\newtheorem{lemma}[theorem]{Lemma}
\newtheorem{definition}[theorem]{Definition}

\theoremstyle{remark}

\theoremstyle{definition}

\usepackage[textsize=tiny]{todonotes}

\icmltitlerunning{Richer Bayesian Last Layers with Subsampled NTK Features}

\begin{document}

\twocolumn[
  \icmltitle{Richer Bayesian Last Layers with Subsampled NTK Features} % Tentative

  % It is OKAY to include author information, even for blind submissions: the
  % style file will automatically remove it for you unless you've provided
  % the [accepted] option to the icml2026 package.

  % List of affiliations: The first argument should be a (short) identifier you
  % will use later to specify author affiliations Academic affiliations
  % should list Department, University, City, Region, Country Industry
  % affiliations should list Company, City, Region, Country

  % You can specify symbols, otherwise they are numbered in order. Ideally, you
  % should not use this facility. Affiliations will be numbered in order of
  % appearance and this is the preferred way.
  \icmlsetsymbol{equal}{*}

  \begin{icmlauthorlist}
    \icmlauthor{Sergio Calvo-Ordoñez}{equal,math,omi,oatml}
    \icmlauthor{Jonathan Plenk}{equal,math,omi}
    \icmlauthor{Richard Bergna}{camb}
    \icmlauthor{Álvaro Cartea}{math,omi} \\
    \icmlauthor{Yarin Gal}{oatml}
    \icmlauthor{Jose Miguel Hernández-Lobato}{camb}
    \icmlauthor{Kamil Ciosek}{spoti}
  \end{icmlauthorlist}
  
  \icmlaffiliation{math}{Mathematical Institute, University of Oxford}
  \icmlaffiliation{omi}{Oxford-Man Institute, University of Oxford}
  \icmlaffiliation{oatml}{OATML, University of Oxford}
  \icmlaffiliation{camb}{Department of Engineering, University of Cambridge}
  \icmlaffiliation{spoti}{Spotify}
  
  \icmlcorrespondingauthor{Sergio Calvo-Ordoñez}{sergio.calvoordonez@maths.ox.ac.uk}
  \icmlcorrespondingauthor{Jonathan Plenk}{jonathan.plenk@maths.ox.ac.uk}

  \icmlEqualContribution
  
  % You may provide any keywords that you find helpful for describing your
  % paper; these are used to populate the "keywords" metadata in the PDF but
  % will not be shown in the document
  \icmlkeywords{Machine Learning, ICML}

  \vskip 0.3in
]

% this must go after the closing bracket ] following \twocolumn[ ...

% This command actually creates the footnote in the first column listing the
% affiliations and the copyright notice. The command takes one argument, which
% is text to display at the start of the footnote. The \icmlEqualContribution
% command is standard text for equal contribution. Remove it (just {}) if you
% do not need this facility.

% Use ONE of the following lines. DO NOT remove the command.
% If you have no special notice, KEEP empty braces:
\printAffiliationsAndNotice{}  % no special notice (required even if empty)
% Or, if applicable, use the standard equal contribution text:
% \printAffiliationsAndNotice{\icmlEqualContribution}

\vspace{-0.3in}
\begin{abstract}
Bayesian Last Layers (BLLs) provide a convenient and computationally efficient way to estimate uncertainty in neural networks. However, they underestimate epistemic uncertainty because they apply a Bayesian treatment only to the final layer, ignoring uncertainty induced by earlier layers. We propose a method that improves BLLs by leveraging a projection of Neural Tangent Kernel (NTK) features onto the space spanned by the last-layer features. This enables posterior inference that accounts for variability of the full network while retaining the low computational cost of inference of a standard BLL. We show that our method yields posterior variances that are provably greater or equal to those of a standard BLL, correcting its tendency to underestimate epistemic uncertainty.
To further reduce computational cost, we introduce a uniform subsampling scheme for estimating the projection matrix and for posterior inference. We derive approximation bounds for both types of subsampling. Empirical evaluations on UCI regression, contextual bandits, image classification, and out-of-distribution detection tasks in image and tabular datasets, demonstrate improved calibration and uncertainty estimates compared to standard BLLs and competitive baselines, while reducing computational cost.
\end{abstract}

\vspace{-0.2in}
\section{Introduction}

\begin{figure*}[h!]
    \centering

    \begin{minipage}{0.4\textwidth}
        \centering
        \includegraphics[width=\linewidth]{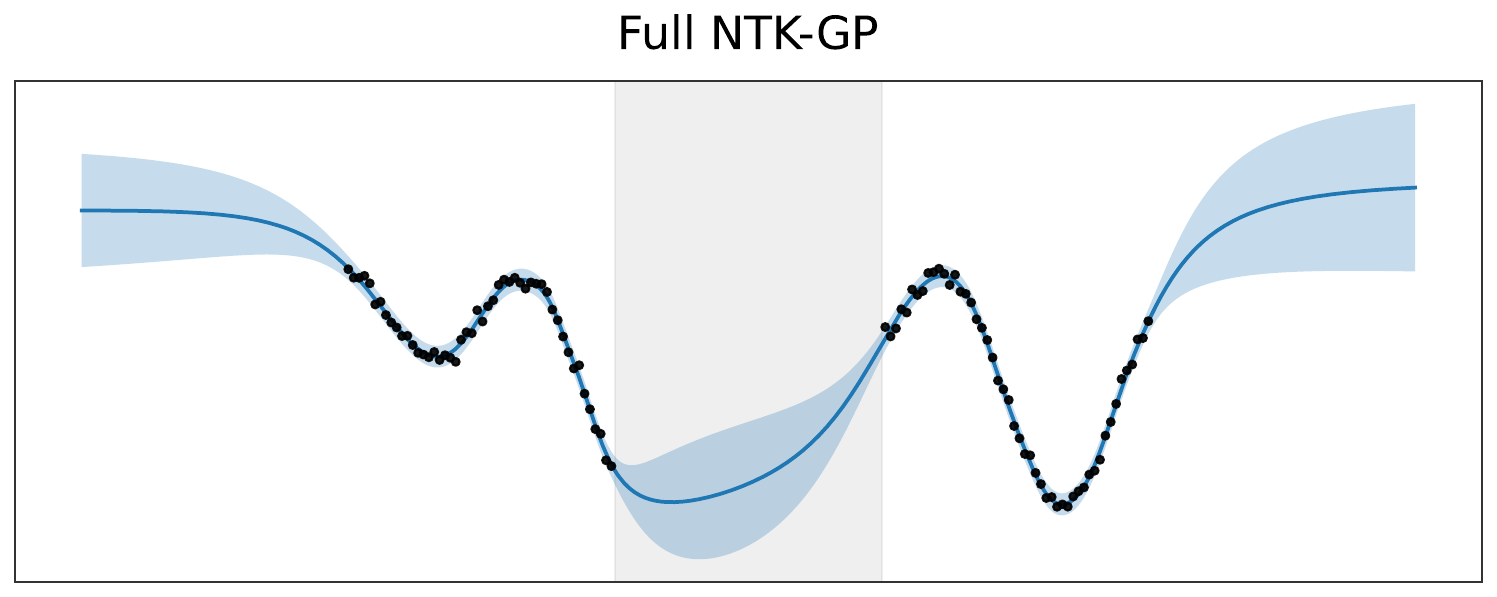}
    \end{minipage}
    \hspace{0.3in}
    \begin{minipage}{0.4\textwidth}
        \centering
        \includegraphics[width=\linewidth]{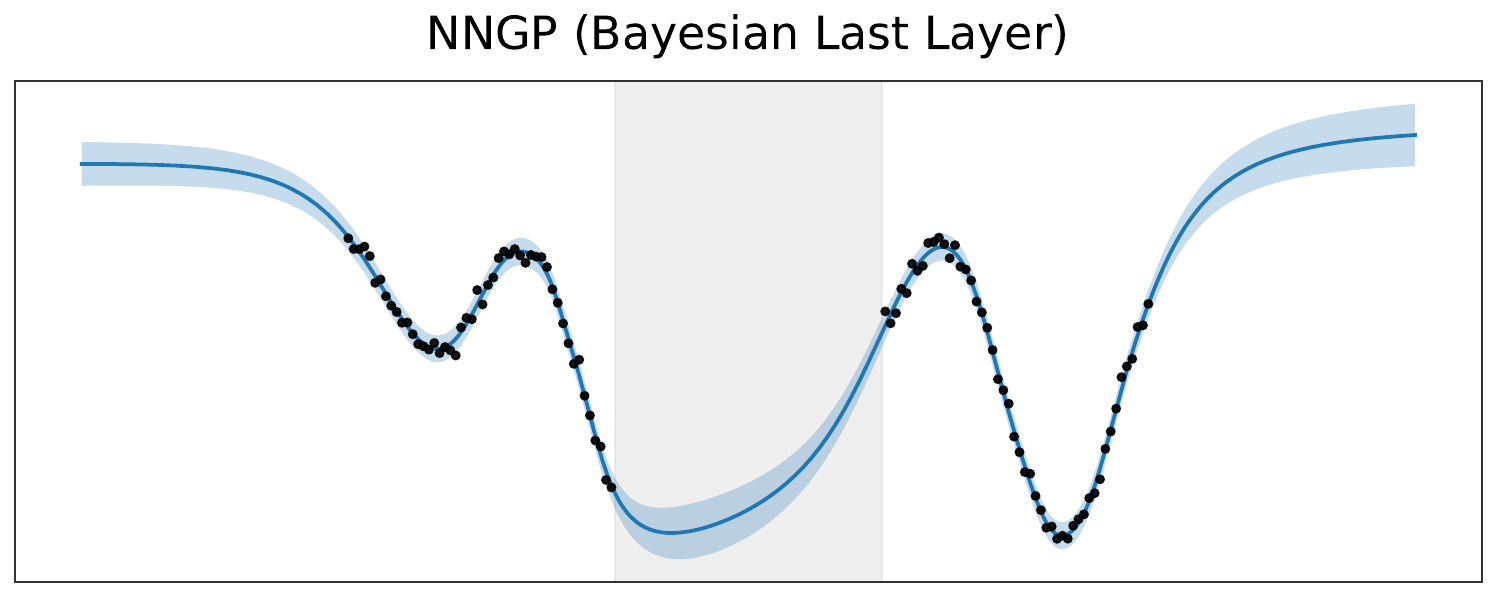}
    \end{minipage}

    \begin{minipage}{0.4\textwidth}
        \centering
        \includegraphics[width=\linewidth]{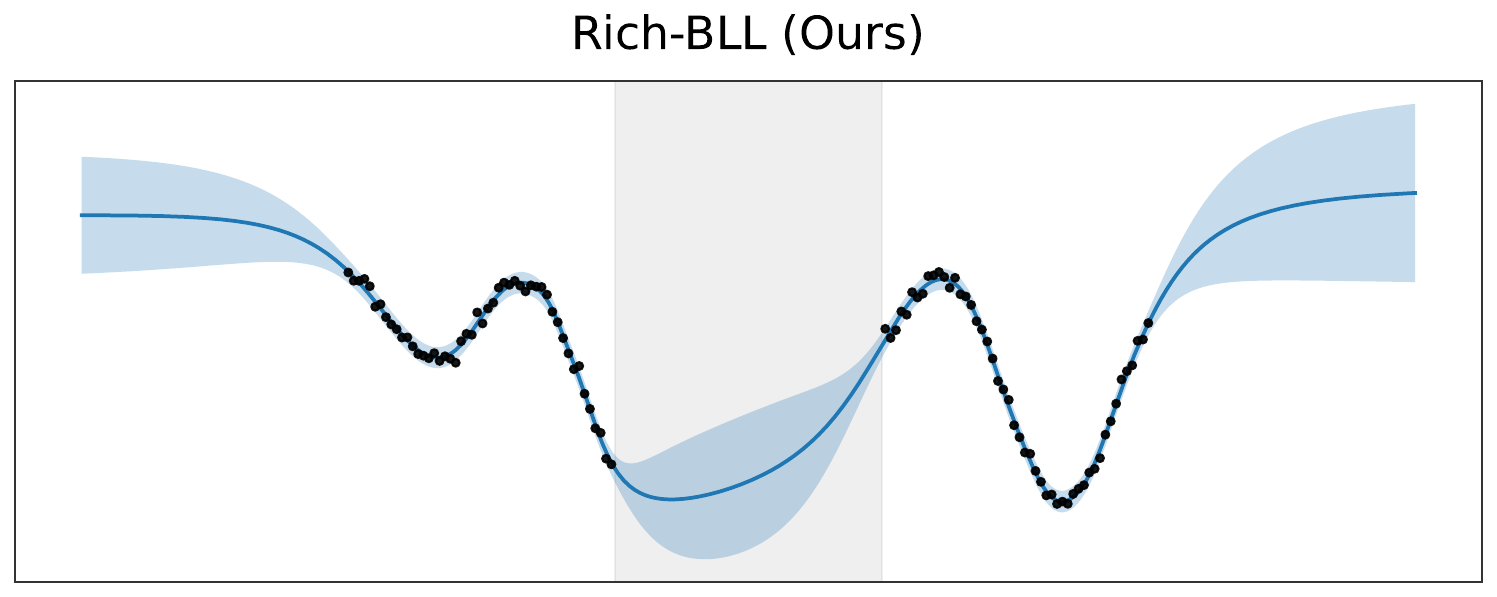}
    \end{minipage}
    \hspace{0.3in}
    \begin{minipage}{0.4\textwidth}
        \centering
        \includegraphics[width=\linewidth]{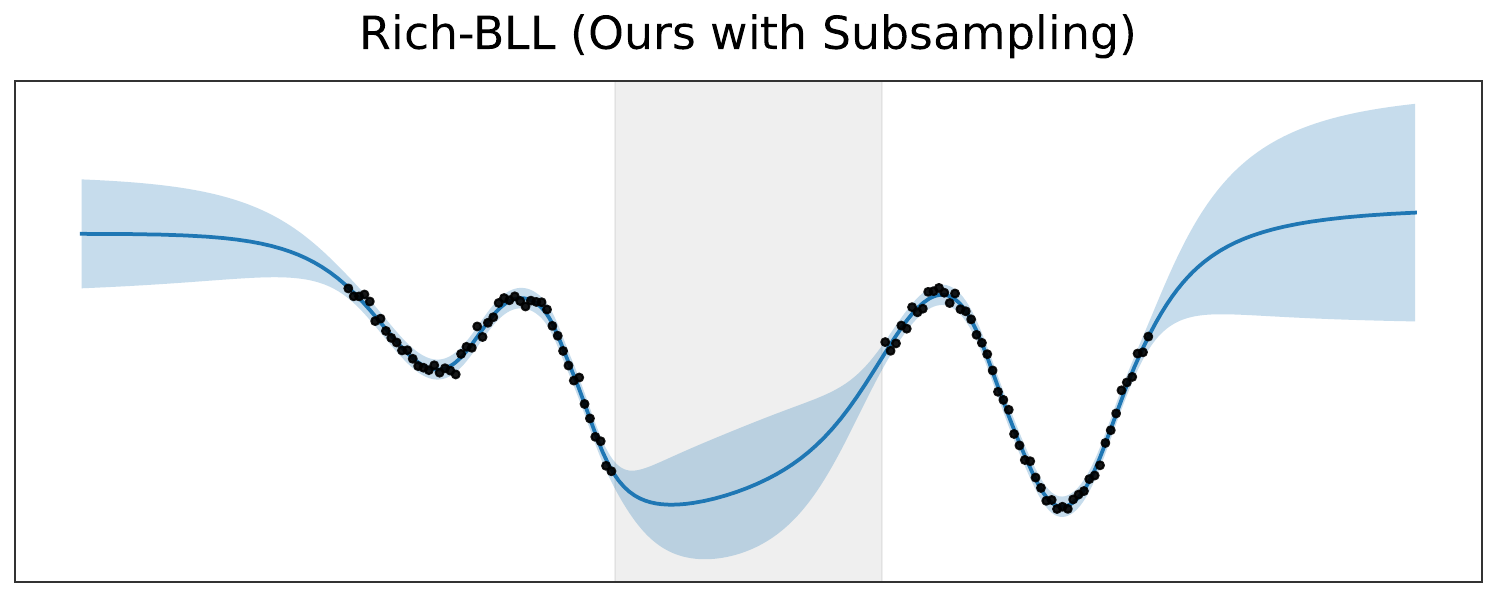}
    \end{minipage}

    \caption{
        Predictive uncertainty comparison for a 1D regression problem. NNGP (BLL) underestimates epistemic uncertainty relative to the NTK-GP, while our NTK approximation (Rich-BLL) recovers richer uncertainty at BLL cost, even when using uniform subsampling. \vspace{-0.1in}
    }
    \label{fig:ntk_comparison}
\end{figure*}

Uncertainty estimates play an important role in the deployment of neural networks when decisions depend on model confidence or when there are data distribution shifts \cite{kendall2017uncertainties}. Several Bayesian and ensemble methods have been proposed to improve uncertainty quantification in deep learning \cite{huang2017snapshot, lakshminarayanan2017simple, bergna2025uncertainty, he2020bayesian}, but these approaches often incur substantial computational overhead relative to standard training and inference. For instance, variational methods \cite{blundell2015weight}, Markov Chain Monte Carlo \cite{papamarkou2022challenges, izmailov2021bayesian}, and sampling-based approximations \cite{maddox2019simple, chen2014stochastic} introduce additional optimization or sampling loops, while techniques such as Bayesian dropout \cite{gal2016dropout} require repeated forward passes. Even single-pass methods demand architectural changes or specialized loss functions, making them difficult to apply in large models \cite{van2020uncertainty, liu2020simple}.

Epistemic uncertainty arises from limited data, and in neural networks is commonly modeled through uncertainty over model parameters \cite{gal2016uncertainty}. The Neural Tangent Kernel (NTK) \cite{jacot2018neural} captures this effect for wide neural networks and provides a theoretically grounded way to model epistemic variability induced by weight uncertainty \cite{wilson2025uncertainty} through the NTK Gaussian Process (NTK-GP) framework \cite{he2020bayesian, ordonezgaussian}. However, computing the NTK-GP posterior requires solving a linear system whose dimension grows with the number of training points (or total number of parameters), making it prohibitive in large-scale settings.

A common practical alternative for uncertainty estimation in neural networks is to place a Bayesian linear model on top of the final hidden representation, known as a Bayesian last layer (BLL) \cite{williams2006gaussian}. BLLs are computationally attractive because posterior inference reduces to Bayesian linear regression in the last-layer feature space, avoiding kernel computations over training data. In the infinite width limit, Bayesian linear regression in these features corresponds to a Gaussian process with the Neural Network Gaussian Process (NNGP) kernel \cite{matthews2018gaussian}. However, BLLs ignore uncertainty induced by earlier layers, and therefore underestimate epistemic uncertainty compared to the NTK-GP.

Several works have attempted to address the underestimation of epistemic uncertainty in BLLs by refining the last-layer posterior. Laplace-based Bayesian last-layer methods approximate the curvature of the loss around a MAP estimate to obtain a Gaussian posterior over the final layer weights, improving calibration while retaining the Bayesian treatment at the last layer \cite{kristiadi2020being}. Variational Bayesian Last Layers (VBLL) \cite{harrison2024variational} similarly restrict variational inference to the final layer, yielding a sampling-free, single-forward-pass model with quadratic complexity in the last-layer width. Other approaches extend Laplace approximations beyond the last layer using structured curvature estimates, such as Kronecker-factored approximations of the Hessian or Fisher matrix \cite{ritter2018scalable}, but their computational cost scales with the number of parameters. As a result, existing approaches either refine the last-layer posterior while ignoring uncertainty induced by earlier layers, or extend beyond the last layer at a computational cost that is impractical for large networks. Other approaches introduce alternative deterministic uncertainty estimators that modify the training objective or output parameterization \cite{mukhoti2023deep}.

\vspace{-0.05in}
\begin{table}[h!]
\centering
\small
% \caption{Computational complexity of NTK-GP inference, NNGP, and our method with subsampling (S) and without. }
\caption{Computational complexity of NTK-GP inference, NNGP, and our method with subsampling (S) and without. $N$ is the number of training points, $p$ the total number of network parameters (eNTK feature dimension), $r$ the last-layer feature dimension, and $k$ the number of subsampled training points.}
\label{tab: complexities}
\begin{tabular}{lccc}
\toprule
Method & Features & Time & Memory \\
\midrule
NTK-GP/LLA & $\phi^p (x)$ & $\mathcal{O}(N^3 + N^2p)$ & $\mathcal{O}(N^2)$ \\
NNGP/LL-LLA/BLL & $\phi^r (x)$ & $\mathcal{O}(r^3 + Nr^2)$ & $\mathcal{O}(r^2)$ \\
Rich-BLL & $L^\top\phi^r (x)$ & $\mathcal{O}(r^3 + Nr^2)$ & $\mathcal{O}(r^2)$ \\
Rich-BLL (S) & $L_S^\top\phi^r (x)$ & $\mathcal{O}(r^3 + kr^2)$ & $\mathcal{O}(r^2)$ \\
\bottomrule
\end{tabular}
\end{table}
\vspace{-0.05in}

% To obtain better epistemic uncertainty at the computational cost of a BLL, we propose to approximate NTK-GP inference by modifying the kernel used in the last layer. 
In this work, we propose to approximate NTK-GP inference by modifying the kernel used in the last layer to obtain better epistemic uncertainty at the computational cost of inference in a BLL. Using NTK features from earlier layers and last-layer (NNGP) features on the training data, we estimate a small positive-definite matrix that captures the contributions of earlier layers to the NTK, and reparameterize it via a Cholesky factor. We then obtain transformed features and perform Bayesian linear regression in this new feature space, which implicitly incorporates additional NTK structure while preserving the computational complexity of a standard BLL. To further reduce the cost, we estimate the required feature covariances using only a uniformly subsampled subset of training points and show that the resulting posterior is a sensible approximation whose error decreases with the number of subsamples. The main approximation behind our method is that the contribution of earlier-layer eNTK features can be captured by a low-dimensional correction in the last-layer feature space. 
% We motivate this approximation in Appendix~\ref{app:low_rank_motivation}, where we provide empirical eNTK spectral analyses for the models used in our experiments, prove an exact recovery result in the low-rank regime, and introduce a relative quasi-low-rank residual that quantifies how well the last-layer span captures the full eNTK feature matrix.

Our contributions are summarized as follows:

\begin{itemize}
    \item We introduce a scalable approximation to NTK-GP inference that enriches Bayesian last layers by incorporating contributions from earlier layers through a low-dimensional kernel correction, providing better calibrated epistemic uncertainty at the computational cost of a standard BLL.
    \item We propose a uniform subsampling scheme for estimating the kernel correction and computing the posterior, enabling inference to scale with the number of subsamples rather than the full dataset size.
    \item We provide theoretical guarantees for our proposed method. In particular, we show that, without subsampling, the posterior variance is always more conservative than that of a standard BLL, and we derive approximation bounds for both the kernel correction and the subsampled posterior.
    \item We validate the proposed method in terms of calibration and uncertainty estimation on UCI regression, contextual bandits, and out-of-distribution detection tasks, while maintaining a low computational cost.
\end{itemize}

% \todo{check commented out lines}% \citet{wilson2025uncertainty} propose to form an ensemble of networks linearized around pre-trained parameters. Retraining these linear models from different initial parameters gives a approximates the posterior distribution of the NTK-GP for Regression.

% TRAK features, TODO cite. (https://arxiv.org/abs/2303.14186)

\begin{figure*}[h!]
    \centering
        \includegraphics[width=0.9\textwidth]{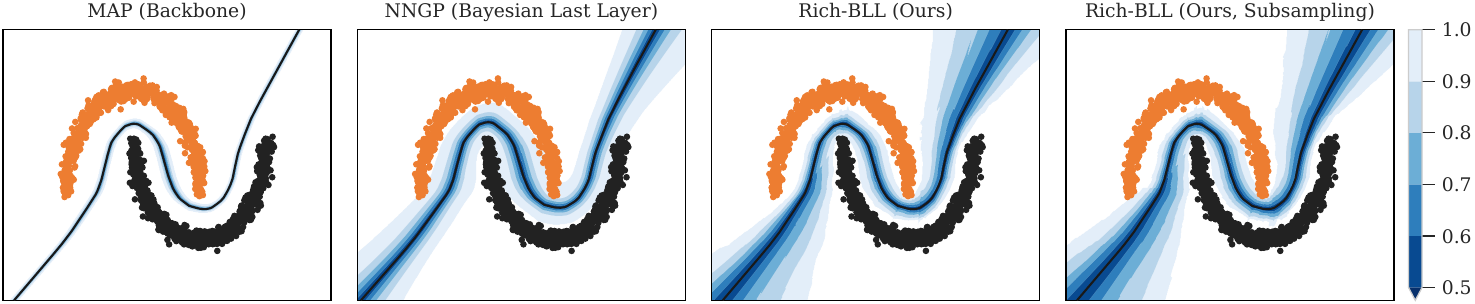}
    \caption{
    Predictive mean and uncertainty for a 1D classification toy problem. From left to right: deterministic MAP backbone, Bayesian last layer (NNGP), Rich-BLL (ours), and Rich-BLL subsampling. Shaded regions indicate predictive uncertainty.
    }
    \label{fig:ntk_approx_toy}
\end{figure*}

\section{Preliminaries}
Consider a supervised regression setting in which observations are generated as $y = f_\theta(x) + \varepsilon$, with $\varepsilon \sim \mathcal{N}(0, \sigma^2I)$ and $\sigma^2>0$. We study a neural network $f_{\theta}(x)$ with input $x\in\mathbb{R}^d$ and parameters $\theta \in\mathbb{R}^{p}$. Consider having trained the neural network to optimal parameters $\hat{\theta}$.
A first-order Taylor expansion of the network around $\hat \theta$ motivates representing the effect of small parameter perturbations through the network's parameter gradients. This leads to the empirical NTK (eNTK) features, defined for a finite-width network and evaluated at $\hat \theta$.
The eNTK features are given by the parameter-gradient
\begin{equation}
    \phi^p(x) 
    := \nabla_{\theta} f_{\hat{\theta}}(x) \in \mathbb{R}^{p} .
\end{equation}
In this work, we discuss performing uncertainty quantification using a Gaussian process (GP, \citet{williams2006gaussian}) with eNTK features. When $\hat{\theta}$ is the MAP estimate, this is equivalent to the Linearized Laplace Approximation \citep{daxberger2021laplace, immer2021improving}.
% , see Appendix \ref{app: Linearized Laplace connection}.
Consider training points $\mathbf{x}_1,\ldots,\mathbf{x}_N\in\mathbb{R}^d$. Denote the feature matrix by $\Phi^{p}_{\mathbf{x}} \in \mathbb{R}^{N\times p}$, and for any $\mathbf{x}'_1,\ldots\mathbf{x}'_{N'} \in \mathbb{R}^{d}$, define the eNTK kernel matrix
\begin{equation}
    k^p_{\mathbf{x},\mathbf{x}'} := \Phi^p_{\mathbf{x}}\Phi^{p\top}_{\mathbf{x}'} \in \mathbb{R}^{N\times N'}.
\end{equation}
The posterior-predictive covariance is
\begin{equation}
    S^{\ntk}_{\mathbf{x}',\mathbf{x}'} = k^p_{\mathbf{x}',\mathbf{x}'} - k^p_{\mathbf{x}',\mathbf{x}} \left(k^p_{\mathbf{x},\mathbf{x}} + \sigma^2 I_{N} \right)^{-1}k^p_{\mathbf{x},\mathbf{x}'}
    %\in \mathbb{R}^{N'\times N'}
    .
\end{equation}
For a single point $x\in\mathbb{R}^{d}$, this gives the predictive distribution
\begin{equation}
    p(y|x,D) = 
    \mathcal{N}(y; f_{\hat{\theta}}(x), S^{\ntk}(x,x) + \sigma^2).
\end{equation}
Similarly, the Bayesian Last Layer (BLL) kernel matrix $k^r_{\mathbf{x},\mathbf{x}'} := \Phi_{\mathbf{x}}^r \Phi_{\mathbf{x}'}^{r\top}$ gives the predictive uncertainty $S^{\bll}_{\mathbf{x}',\mathbf{x}'}$, where $r$ is the last-layer feature dimension.

% TODO: For notational simplicity, we consider one-dimensional outputs. However, our method and theorems also work for multi-dimensional outputs, taking a block-diagonal.
% Handle multiple outputs $k$ of the neural network in the notation. Have to replace any $N$ with $k\times N$, and $N\times N'$ with $k\times N \times N' \times k$. Or write $kN$ and $kN\times N'k$? And $K\times p$ instead of $p$ for the feature vectors? And for things to be consistent, we would need $\phi^p(x)\in \mathbb{R}^{p\times K}$. Also, use $C$ or $K$ for the output dimension? (we're already using $k$ for the subsample) The predictive distribution across outputs is independent anyways I think, so does this matter? Also, the projection matrix would be $A\in \mathbb{R}^{m\times r\times C}$, and would actually need tensor notation then which makes things very messy.

% TODO: Should we include more stuff here? E.g.Linearized Laplace

\section{Methodology}\label{sec: method}

In this section, we introduce the proposed method. The idea is based on approximating the empirical NTK feature map by a low-dimensional projection onto the span of last-layer features, allowing posterior inference to be carried out entirely in the last-layer feature space. We first review the empirical NTK and its decomposition into last-layer (NNGP) and non-last-layer components. We then introduce a data-driven transformation that approximates the full NTK features using last-layer features only, and show how this leads to an efficient closed-form posterior covariance via a modified Bayesian last layer. A detailed motivation and error analysis for the low-rank projection used below is given in Appendix~\ref{app: motivation-low-rank}. We prove that the resulting predictive uncertainty is always more conservative than that of a standard BLL. Finally, we introduce a uniform subsampling scheme that further reduces computational cost and provide theoretical guarantees on the resulting approximation error. See Table \ref{tab: complexities} for a summary of the complexities and features notation.

\subsection{Empirical Neural Tangent Kernel Features}

We begin by reviewing the empirical Neural Tangent Kernel (eNTK) representation induced by a trained neural network and its relationship to BLLs. This representation makes explicit which parameters contribute to predictive uncertainty and will allow us to identify a tractable approximation. We take the last layer to be linear, with weights and biases $\theta^{l+1} \in \mathbb{R}^r$. Denote the remaining parameters by $\theta^{\le l} \in \mathbb{R}^{m}$, such that $\theta = \vecconcat(\theta^{\le l}, \theta^{l+1}) \in \mathbb{R}^{m+r} = \mathbb{R}^p$. Define the NNGP-features
\begin{equation}
    \phi^r(x) 
    := \nabla_{\theta^{l+1}} f_{\hat{\theta}}(x) 
    = \vecconcat(a^{l}_{\hat{\theta}}(x),1) \in \mathbb{R}^r.
\end{equation}
These are equal to the post-activation $a^l_{\hat{\theta}}(x)$ of the last hidden layer, augmented with a bias term, and form the representation used in Bayesian last layer (BLL) methods \citep{snoek2015scalable}. We have
\begin{equation}
    \phi^p(x) 
    = \vecconcat\left(\phi^m(x), \phi^r(x) \right)
    \in \mathbb{R}^{m+r},
\end{equation}
where $\phi^m(x) \in \mathbb{R}^m$ is the gradient with respect to the remaining parameters. The full eNTK feature map therefore decomposes into a contribution from the last layer, $\phi^r(x)$, and a contribution from all earlier layers $\phi^m(x)$. Our method is built on approximating the contribution of $\phi^m(x)$ to the NTK using a linear transformation of the last-layer features $\phi^r(x)$.

% TODO. Should read and cite some papers on the spectrum of the NTK (and NNGP), quick decaying eigenvalues etc.
Direct NTK-GP inference requires operating in the full parameter-gradient feature space of dimension $p=m+r$, which is computationally infeasible in modern networks.
However, prior work suggests that the eigenvalues of the NTK can decay rapidly under certain conditions, leading to an effective low-dimensional structure in practice. For example, \citet{belfer2024spectral} analyze the spectrum of the NTK for deep residual networks and show that, under a spherical data distribution, the NTK eigenvalues decay polynomially with the index of the corresponding eigenfunctions, implying that many directions contribute weakly to the kernel. Power-series characterizations of the NTK also show that activation function properties influence the decay of kernel coefficients \citep{murray2022characterizing}, and empirical observations find that NTK matrices often exhibit a few large eigenvalues followed by smaller ones \citep{bowman2023spectral}. Motivated by this, we seek a linear approximation of the non-last-layer features in terms of the last-layer features. Appendix~\ref{app: motivation-low-rank} makes this motivation precise: spectral decay supports the presence of a dominant low-dimensional eNTK subspace, while the quasi-low-rank residual measures the extent to which this subspace is captured by the last-layer feature span.

\subsection{Approximating the eNTK features for the posterior covariance}
\label{sec: ApproxeNTK}

Assume there is a matrix $A\in \mathbb{R}^{m\times r}$, such that
\begin{equation}
\label{eq: Approximation eNTK with NNGP}
    \phi^m(x) \approx A\phi^r(x).
\end{equation}
We empirically estimate $A$ by minimizing
\begin{equation}
    \min_A \sum_{i=1}^N \lVert \phi^m(\mathbf{x}_i) - A\phi^r(\mathbf{x}_i) \rVert_2^2.
\end{equation}
This corresponds to projecting the non-last-layer NTK features onto the span of the last-layer features using training data. For $N\ge r$, $\Phi^{r\top}_{\mathbf{x}} \Phi^r_{\mathbf{x}} \in \mathbb{R}^{r\times r}$ has full rank, so
\begin{equation}
    A=\Phi_{\mathbf{x}}^{m\top} \Phi_{\mathbf{x}}^{r} (\Phi_{\mathbf{x}}^{r\top} \Phi_{\mathbf{x}}^r)^{-1} \in \mathbb{R}^{m\times r}.
\end{equation}
Now, by defining
\begin{equation}
    B:= \begin{pmatrix} A \\ I_r \end{pmatrix} \in \mathbb{R}^{(m+r)\times r},
\end{equation}
we can express the full NTK feature map using only $r$-dimensional features. In particular, the approximation
\begin{equation}
    \phi^p(x) \approx B \phi^r(x) =: \phi^B(x) \in \mathbb{R}^{m+r},
\end{equation}
i.e., $\Phi^p_{\mathbf{x}} \approx \Phi^r_{\mathbf{x}}B^{\top}$, implies that the NTK-GP inference can be carried out using a modified Bayesian linear model in the last-layer feature space. 
% gives $\phi^p(x) \approx B \phi^r(x)$, i.e. $\Phi^p_{\mathbf{x}} \approx \Phi^r_{\mathbf{x}}B^{\top}$. 
Define the approximate kernel $k^B_{\mathbf{x},\mathbf{x}'} := \Phi^r_{\mathbf{x}}B^{\top} B \Phi^{r\top}_{\mathbf{x}'} \in \mathbb{R}^{N\times N'}$. Then the approximate posterior predictive covariance is
\begin{equation}
    S^B_{\mathbf{x}',\mathbf{x}'}
    := k^B_{\mathbf{x}',\mathbf{x}'} - k^B_{\mathbf{x}',\mathbf{x}} \left(k^B_{\mathbf{x},\mathbf{x}} + \sigma^2 I_{N} \right)^{-1}k^B_{\mathbf{x},\mathbf{x}'}.
\end{equation}
This involves inverting an $N\times N$ matrix. After applying Woodbury's Lemma, we would still require inverting an $p\times p$ matrix. However, using the special structure of the feature map, we can reduce this to inverting an $r\times r$ matrix:
\begin{restatable}{theorem}{woodburynngpthm}
\label{theorem: Woodbury NNGP}
Let $B\in \mathbb{R}^{(m+r)\times r}$ have full column rank. Define $\phi^{B}(x) := B\phi^{r}(x)$. Then
\begin{equation}
     S^B_{\mathbf{x}',\mathbf{x}'} 
     = \Phi^r_{\mathbf{x}'} \left(\frac{1}{\sigma^2}\Phi^{r\top}_{\mathbf{x}} \Phi^r_{\mathbf{x}} + (B^{\top}B)^{-1}\right)^{-1} \Phi_{\mathbf{x}'}^{r\top}.
\end{equation}
\end{restatable}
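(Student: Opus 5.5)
The plan is to view $S^B$ as the predictive covariance of a Bayesian linear model with a Gaussian prior on an $r$-dimensional weight vector, and then apply the push-through identity.

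\textbf{Step 1 (rewrite the kernel).} Set $G := B^\top B \in \mathbb{R}^{r\times r}$. Since $B$ has full column rank, $G$ is symmetric positive definite, so $G^{-1}$ exists. Write $\Phi := \Phi^r_{\mathbf{x}}$ and $\Phi' := \Phi^r_{\mathbf{x}'}$. Then
$$k^B_{\mathbf{x}',\mathbf{x}'} = \Phi' G \Phi'^\top, \qquad k^B_{\mathbf{x}',\mathbf{x}} = \Phi' G \Phi^\top, \qquad k^B_{\mathbf{x},\mathbf{x}} = \Phi G\Phi^\top.$$
Substituting, the definition of $S^B$ factors as
$$S^B_{\mathbf{x}',\mathbf{x}'} = \Phi' \left[ G - G\Phi^\top (\Phi G \Phi^\top + \sigma^2 I_N)^{-1} \Phi G \right] \Phi'^\top.$$
So everything reduces to the $r\times r$ bracketed matrix, and only $G$ (not $B$ itself) enters.

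\textbf{Step 2 (Woodbury on the bracket).} Apply the Woodbury identity
$$(G^{-1} + U^\top C^{-1} U)^{-1} = G - G U^\top (C + U G U^\top)^{-1} U G$$
with $U = \Phi$ and $C = \sigma^2 I_N$. This gives
$$G - G\Phi^\top(\Phi G\Phi^\top + \sigma^2 I_N)^{-1}\Phi G = \left(G^{-1} + \tfrac{1}{\sigma^2}\Phi^\top\Phi\right)^{-1},$$
which is exactly the claimed middle factor with $G^{-1} = (B^\top B)^{-1}$.

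\textbf{Step 3 (invertibility).} I need to check that every inverse used is well defined without assuming $N \ge r$:
\begin{itemize}
\item $\Phi G \Phi^\top + \sigma^2 I_N$ is positive definite because $\sigma^2 > 0$.
\item $G^{-1} + \sigma^{-2}\Phi^\top\Phi$ is positive definite because $G^{-1}$ is positive definite and $\Phi^\top\Phi$ is positive semidefinite.
\end{itemize}
Both therefore exist, and Woodbury applies.

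\textbf{Main obstacle.} The only delicate point is the role of the full-column-rank hypothesis: it is used solely to make $G^{-1}$ exist. To avoid relying on a remembered form of Woodbury, I would verify the Step 2 identity directly. Multiply $G^{-1} + \sigma^{-2}\Phi^\top\Phi$ by the right-hand side $G - G\Phi^\top M^{-1}\Phi G$, where $M = \Phi G\Phi^\top + \sigma^2 I_N$. Expanding and using $\sigma^{-2}\Phi^\top(M - \sigma^2 I_N) = \sigma^{-2}\Phi^\top\Phi G\Phi^\top$, the cross terms cancel and the product equals $I_r$.
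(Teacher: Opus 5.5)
Your proof is correct, but it takes a more direct route than the paper. The paper first applies Woodbury with the identity prior on $\mathbb{R}^{p}$, rewriting $S^B_{\mathbf{x}',\mathbf{x}'}$ as $\Phi^r_{\mathbf{x}'}B^{\top}\bigl(\tfrac{1}{\sigma^2}B\Phi^{r\top}_{\mathbf{x}}\Phi^r_{\mathbf{x}}B^{\top}+I_p\bigr)^{-1}B\Phi^{r\top}_{\mathbf{x}'}$. It then uses the push-through identity to move $B$ across the $p\times p$ inverse, obtaining $\Phi^r_{\mathbf{x}'}G\bigl(\tfrac{1}{\sigma^2}\Phi^{r\top}_{\mathbf{x}}\Phi^r_{\mathbf{x}}G+I_r\bigr)^{-1}\Phi^{r\top}_{\mathbf{x}'}$ with $G=B^{\top}B$. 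Only in the last step does it factor out $G$ to reach the claimed middle factor.

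You instead note from the start that the kernel depends on $B$ only through $G$. Hence $S^B$ is the predictive covariance of an $r$-dimensional Bayesian linear model with prior covariance $G$, and you apply the general-prior form of Woodbury once. Despite your opening sentence, you never actually need the push-through identity.

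Your version has three advantages. It is shorter. Every matrix you invert is symmetric positive definite, so invertibility is transparent; the paper's push-through step implicitly needs the non-symmetric matrix $\tfrac{1}{\sigma^2}\Phi^{r\top}_{\mathbf{x}}\Phi^r_{\mathbf{x}}G+I_r$ to be invertible, which requires a short eigenvalue argument. And it exposes immediately that only $B^{\top}B$ matters, which is exactly what the subsequent Cholesky reformulation exploits.

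The paper's version fits its narrative of showing how the naive $p\times p$ inverse collapses to an $r\times r$ one. Its intermediate $r\times r$ formula also remains valid when $G$ is singular, so full column rank is used only in the final factoring step.
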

The proof is in Appendix \ref{app: proofs woodbury}.

Moreover, consider the Cholesky decomposition $B^{\top}B = LL^{\top}$ with lower triangular $L\in \mathbb{R}^{r\times r}$. Defining the lower-dimensional features 
\begin{equation}
    \phi^L(x) := L^{\top}\phi^r(x) \in \mathbb{R}^r
\end{equation}
further allows us to simplify the predictive covariance:
\begin{restatable}{theorem}{choleskywoodbury}
\label{thm: choleskywoodbury}
    Using the features $\phi^L(x)\in\mathbb{R}^r$ is equivalent to using $\phi^B(x)\in\mathbb{R}^{m+r}$, and the predictive covariance can be written as 
    \begin{align}
    S^B_{\mathbf{x}',\mathbf{x}'} 
    &= \Phi^L_{\mathbf{x}'} \left(\frac{1}{\sigma^2}\Phi^{L\top}_{\mathbf{x}}\Phi^{L}_{\mathbf{x}} + I_r \right)^{-1} \Phi^{L\top}_{\mathbf{x}'} \\
    &=\Phi_{\mathbf{x}'}^r L \left(\frac{1}{\sigma^2}L^{\top} \Phi_{\mathbf{x}}^{r\top}\Phi_{\mathbf{x}}^r L + I_r  \right)^{-1} L^{\top}\Phi_{\mathbf{x}'}^{r\top} .
\end{align}
\end{restatable}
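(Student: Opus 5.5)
The plan is to prove the two displayed equalities in turn. The first is the heart of the matter; the second is a rewriting of it. For the equivalence claim I will check that the two feature maps induce the same kernel. For any inputs,
\[
\Phi^L_{\mathbf{x}}\Phi^{L\top}_{\mathbf{x}'} = \Phi^r_{\mathbf{x}} L L^{\top}\Phi^{r\top}_{\mathbf{x}'} = \Phi^r_{\mathbf{x}} B^{\top}B\,\Phi^{r\top}_{\mathbf{x}'} = k^B_{\mathbf{x},\mathbf{x}'} .
\]
The GP posterior covariance $S^B_{\mathbf{x}',\mathbf{x}'}$ depends on the features only through this kernel. So GP inference with $\phi^L$ and with $\phi^B$ gives identical posterior predictive covariances (and identical means). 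This establishes the sense in which the two are ``equivalent''.

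For the first equality I will invoke Theorem~\ref{theorem: Woodbury NNGP}, now applied to a different factor. Choose $\tilde B := L^{\top}\in\mathbb{R}^{r\times r}$; formally this is the case ``$m=0$'', and I will phrase it as $B$-type features with $\tilde B$ in place of $B$. Then $\tilde B\phi^r(x) = \phi^L(x)$ and $\tilde B^{\top}\tilde B = LL^{\top} = B^{\top}B$. Since $B$ has full column rank, $B^{\top}B$ is positive definite, so its Cholesky factor $L$ exists and is invertible; hence $\tilde B$ has full column rank as well. Theorem~\ref{theorem: Woodbury NNGP} then gives
\[
S^B_{\mathbf{x}',\mathbf{x}'} = \Phi^r_{\mathbf{x}'}\Bigl(\tfrac{1}{\sigma^2}\Phi^{r\top}_{\mathbf{x}}\Phi^r_{\mathbf{x}} + (LL^{\top})^{-1}\Bigr)^{-1}\Phi^{r\top}_{\mathbf{x}'} .
\]
Alternatively, the same expression follows directly from Theorem~\ref{theorem: Woodbury NNGP} with the original $B$, because $(B^{\top}B)^{-1} = L^{-\top}L^{-1}$.

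The next step is to factor the inner matrix:
\[
\tfrac{1}{\sigma^2}\Phi^{r\top}_{\mathbf{x}}\Phi^r_{\mathbf{x}} + L^{-\top}L^{-1} = L^{-\top}\Bigl(\tfrac{1}{\sigma^2}L^{\top}\Phi^{r\top}_{\mathbf{x}}\Phi^r_{\mathbf{x}}L + I_r\Bigr)L^{-1} .
\]
Inverting this gives
\[
L\Bigl(\tfrac{1}{\sigma^2}L^{\top}\Phi^{r\top}_{\mathbf{x}}\Phi^r_{\mathbf{x}}L + I_r\Bigr)^{-1}L^{\top} .
\]
Substituting back yields the second displayed line of the theorem. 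Using $\Phi^L_{\mathbf{x}} = \Phi^r_{\mathbf{x}}L$, that line is exactly the first displayed line.

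The only real care point is justifying the invertibility needed for the factorization. Full column rank of $B$ makes $B^{\top}B$ positive definite, so $L$ is nonsingular. The matrix $\tfrac{1}{\sigma^2}L^{\top}\Phi^{r\top}_{\mathbf{x}}\Phi^r_{\mathbf{x}}L + I_r$ is positive definite because it is a PSD matrix plus the identity, so it is invertible. Everything else is routine algebra, so I do not anticipate a genuine obstacle beyond these checks.
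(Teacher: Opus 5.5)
Your proof is correct and follows essentially the paper's route: the kernel identity $LL^{\top}=B^{\top}B$ gives the equivalence, and Theorem~\ref{theorem: Woodbury NNGP} gives the formula. The only difference is cosmetic: the paper applies that theorem to the reparametrized pair $\tilde B = BL^{-\top}$, $\tilde\phi^r = L^{\top}\phi^r$, so that $\tilde B^{\top}\tilde B = I_r$ yields the $I_r$-form immediately, whereas you apply it with $B$ (or $L^{\top}$) and then do the congruence $L^{-\top}(\cdot)L^{-1}$ and the inversion by hand.
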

The proof is in Appendix \ref{app: proofs woodbury}.

A key property of the proposed approximation is that it never reduces predictive uncertainty relative to a standard Bayesian last layer. As the amount of data grows, both methods concentrate and predictive uncertainty vanishes, while in finite-data or extrapolation regimes the additional NTK components induce higher uncertainty where the model is less constrained by observations. We formalize this property in the following theorem:
\begin{restatable}{theorem}{alwaysmoreuncertainthm}
\label{thm: Always more uncertain than BLL}
Using the approximation of NTK features via $x \mapsto \phi^B(x)
= B\phi^r(x)$ (or, equivalently, $\phi^L(x)$) always gives higher predictive uncertainty than using the BLL features $x\mapsto \phi^r(x)$. In other words, 
\begin{equation}
    S^B_{\mathbf{x}',\mathbf{x}'} 
    \succeq S^{\bll}_{\mathbf{x}',\mathbf{x}'}.
\end{equation}
\end{restatable}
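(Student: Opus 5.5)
The plan is to reduce both covariances to the same $r$-dimensional Bayesian linear regression form, differing only in the prior precision, and then use monotonicity of matrix inversion in the Loewner order. By \cref{theorem: Woodbury NNGP}, $S^B_{\mathbf{x}',\mathbf{x}'} = \Phi^r_{\mathbf{x}'} \big(\tfrac{1}{\sigma^2}\Phi^{r\top}_{\mathbf{x}}\Phi^r_{\mathbf{x}} + (B^\top B)^{-1}\big)^{-1}\Phi^{r\top}_{\mathbf{x}'}$. This requires $B$ to have full column rank, which holds automatically because of the identity block $I_r$ in $B$. Applying the same theorem with $B$ replaced by $\begin{pmatrix}0\\ I_r\end{pmatrix}$, or directly the standard Woodbury identity for the BLL kernel $k^r$, gives $S^{\bll}_{\mathbf{x}',\mathbf{x}'} = \Phi^r_{\mathbf{x}'}\big(\tfrac{1}{\sigma^2}\Phi^{r\top}_{\mathbf{x}}\Phi^r_{\mathbf{x}} + I_r\big)^{-1}\Phi^{r\top}_{\mathbf{x}'}$.

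The key structural step is that $B^\top B = A^\top A + I_r \succeq I_r$, since $A^\top A \succeq 0$. Both matrices are positive definite, so operator monotonicity of inversion, $X \succeq Y \succ 0 \Rightarrow Y^{-1} \succeq X^{-1}$, gives $(B^\top B)^{-1} \preceq I_r$. Writing $G := \tfrac{1}{\sigma^2}\Phi^{r\top}_{\mathbf{x}}\Phi^r_{\mathbf{x}} \succeq 0$, we get $G + (B^\top B)^{-1} \preceq G + I_r$. Both sides are positive definite, since $(B^\top B)^{-1} \succ 0$. Inverting again therefore gives $(G+(B^\top B)^{-1})^{-1} \succeq (G+I_r)^{-1}$.

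Finally, the Loewner order is preserved under congruence: for any matrix $C$, $X\succeq Y$ implies $CXC^\top \succeq CYC^\top$, because $v^\top C(X-Y)C^\top v = (C^\top v)^\top (X-Y)(C^\top v)\ge 0$. Taking $C=\Phi^r_{\mathbf{x}'}$ yields $S^B_{\mathbf{x}',\mathbf{x}'}\succeq S^{\bll}_{\mathbf{x}',\mathbf{x}'}$. The equivalence with the $\phi^L$ features is then immediate from \cref{thm: choleskywoodbury}.

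No step is really hard. The only point needing care is the operator monotonicity of inversion, which I would justify quickly. If $X\succeq Y\succ0$, then $Y^{-1/2}XY^{-1/2}\succeq I$, so its eigenvalues are at least $1$. Hence its inverse $Y^{1/2}X^{-1}Y^{1/2}\preceq I$, and conjugating by $Y^{-1/2}$ gives $X^{-1}\preceq Y^{-1}$. I would also need to ensure that the BLL covariance is indeed the $B=\begin{pmatrix}0\\ I_r\end{pmatrix}$ case (prior precision $I_r$), which matches the definition $k^r = \Phi^r\Phi^{r\top}$.
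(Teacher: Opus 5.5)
Your proposal is correct and follows the paper's proof: it rewrites $S^B$ via Theorem~\ref{theorem: Woodbury NNGP}, uses $B^{\top}B = A^{\top}A + I_r \succeq I_r$, and concludes by monotonicity of inversion and congruence by $\Phi^r_{\mathbf{x}'}$. You also make explicit the operator-monotonicity step, the congruence step, and the identification of $S^{\bll}$ as the $B^{\top}B = I_r$ case, all of which the paper uses without comment.
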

The proof is in Appendix \ref{app: proofs woodbury}.

For our method we only need the lower-triangular $L \in \mathbb{R}^{r\times r}$, and in particular there is no need to store $A \in \mathbb{R}^{m\times r}$. In Appendix \ref{app: Numerical Implementation} we show how to compute $L$ efficiently for a large number of hidden parameters $m$.

% We prove a bound on the approximation error of map $A$ (or $L$) coming from having finitely many data points:
As we have presented above, the transformation matrix $A$ is defined via a least-squares objective in expectation over the data distribution. In practice, $A$ must be estimated from finitely many training points, and the accuracy of the resulting NTK approximation depends on the quality of this estimate. The following result quantifies how well the empirical estimator $\hat A$ concentrates around the solution of the expected least-squares problem as the number of data points increases.
\begin{restatable}{theorem}{projectionconentrationthm}
\label{thm: approximation error projection}
Define the true map $A \in \mathbb{R}^{m\times r}$ by
\begin{equation}
    \min_A \mathbb{E}_{x\sim P_X}\left[\lVert\phi^m(x) - A \phi^r(x) \rVert_2^2 \right].
\end{equation}
Consider its approximation $\hat{A}$ from $N$ data points. Assume that the train points are in a compact set and thus there is $K$ such that for all $x$, $\lVert \phi^m(x) \rVert_2 \le K$ and $\lVert \phi^r(x) \rVert_2 \le K$. Then, there is a constant $K'$ depending on $K$ and $\lambda_{\min}(\mathbb{E}_{x\sim P_X} [\phi^r(x)\phi^r(x)^{\top}])>0$, such that: For any $\delta>0$, there is $N$ large enough such that with probability $1-\delta$ over iid samples $\mathbf{x}_1,\ldots,\mathbf{x}_N \sim P_X$,
\begin{equation}
    \lVert \hat{A} - A \rVert_2 \le K' \sqrt{\frac{\log\left(2(m+r)/\delta \right)}{N}}.
\end{equation}
\end{restatable}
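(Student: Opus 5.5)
We will write both the population and the empirical maps in terms of second-moment matrices and then run matrix concentration on each factor. Define
\[
\Sigma_{rr} := \mathbb{E}[\phi^r(x)\phi^r(x)^{\top}] \in \mathbb{R}^{r\times r}, \qquad \Sigma_{mr} := \mathbb{E}[\phi^m(x)\phi^r(x)^{\top}] \in \mathbb{R}^{m\times r},
\]
with empirical versions $\hat\Sigma_{rr} := \frac1N \Phi^{r\top}_{\mathbf{x}}\Phi^r_{\mathbf{x}}$ and $\hat\Sigma_{mr} := \frac1N\Phi^{m\top}_{\mathbf{x}}\Phi^r_{\mathbf{x}}$. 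The first-order condition of the expected least-squares problem gives $A = \Sigma_{mr}\Sigma_{rr}^{-1}$; this is well defined because $\lambda_{\min}(\Sigma_{rr})>0$. The empirical map is $\hat A = \hat\Sigma_{mr}\hat\Sigma_{rr}^{-1}$, since the factors $1/N$ cancel. We then decompose
\[
\hat A - A = (\hat\Sigma_{mr}-\Sigma_{mr})\hat\Sigma_{rr}^{-1} + \Sigma_{mr}\bigl(\hat\Sigma_{rr}^{-1}-\Sigma_{rr}^{-1}\bigr),
\]
and write $\hat\Sigma_{rr}^{-1}-\Sigma_{rr}^{-1} = \hat\Sigma_{rr}^{-1}(\Sigma_{rr}-\hat\Sigma_{rr})\Sigma_{rr}^{-1}$. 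Using $\lVert \Sigma_{mr}\rVert_2\le K^2$ and $\lVert\Sigma_{rr}^{-1}\rVert_2 = 1/\lambda_{\min}$, this gives
\[
\lVert \hat A - A\rVert_2 \le \lVert \hat\Sigma_{rr}^{-1}\rVert_2\Bigl(\lVert \hat\Sigma_{mr}-\Sigma_{mr}\rVert_2 + \frac{K^2}{\lambda_{\min}}\lVert\hat\Sigma_{rr}-\Sigma_{rr}\rVert_2\Bigr).
\]

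To control both deviations at once, we apply the matrix Hoeffding inequality (or matrix Bernstein for rectangular matrices, Tropp) to a single joint object. Take the $(m+r)\times(m+r)$ second-moment matrix of $\phi^p(x) = \vecconcat(\phi^m(x),\phi^r(x))$. Its summands $\phi^p(x_i)\phi^p(x_i)^\top - \mathbb{E}[\cdot]$ are i.i.d., mean-zero, symmetric, and bounded in operator norm by $\lVert\phi^p\rVert_2^2 \le 2K^2$, up to a factor of 2 after centering. The blocks $\hat\Sigma_{mr}-\Sigma_{mr}$ and $\hat\Sigma_{rr}-\Sigma_{rr}$ are submatrices of this centered sum, so their operator norms are bounded by the norm of the full deviation. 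Matrix Hoeffding in dimension $m+r$ then gives, with probability at least $1-\delta$,
\[
\lVert \hat\Sigma_{pp} - \Sigma_{pp}\rVert_2 \le cK^2\sqrt{\frac{\log(2(m+r)/\delta)}{N}}.
\]
The factor $2(m+r)$ comes from the two-sided dimensional union bound, and this is exactly what appears in the statement.

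The remaining and main obstacle is bounding $\lVert\hat\Sigma_{rr}^{-1}\rVert_2$. This is where "$N$ large enough" is needed. On the same event, Weyl's inequality gives $\lambda_{\min}(\hat\Sigma_{rr}) \ge \lambda_{\min} - cK^2\sqrt{\log(2(m+r)/\delta)/N}$. Choose $N$ so that the right-hand side is at least $\lambda_{\min}/2$, that is, $N \ge 4c^2K^4\log(2(m+r)/\delta)/\lambda_{\min}^2$. Then $\lVert\hat\Sigma_{rr}^{-1}\rVert_2 \le 2/\lambda_{\min}$, and in particular $\hat\Sigma_{rr}$ is invertible, so $\hat A$ is well defined. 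Substituting the deviation bound and this inverse bound into the decomposition yields the claim with
\[
K' = \frac{2cK^2}{\lambda_{\min}}\Bigl(1+\frac{K^2}{\lambda_{\min}}\Bigr),
\]
which depends only on $K$ and $\lambda_{\min}$.

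One point needs care: the entries of $\phi^p$ may mix bias coordinates, but only the norm bounds are used. Using a single joint concentration event, rather than two separate ones, avoids splitting $\delta$ and keeps the logarithmic factor exactly as stated.
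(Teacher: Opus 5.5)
Your proof is correct and follows the paper's argument: the same decomposition of $\hat A - A$, the resolvent identity, Weyl's inequality giving $\lambda_{\min}(\hat\Sigma_{rr})\ge \lambda_{\min}/2$ for large $N$, and matrix concentration for the second-moment deviations. The one difference is how you apply concentration. You control the full $(m+r)\times(m+r)$ second-moment matrix of $\phi^p$ in a single event. The paper instead applies matrix Bernstein separately to $\hat\Sigma_{mr}$ (its $\hat C_{mr}$) and to $\hat\Sigma_{rr}$, with $\delta/2$ each; that split is where its $2(m+r)$ comes from. The paper then has to assume $m\ge r$ to absorb $\log(4r/\delta)$ into $\log(2(m+r)/\delta)$, whereas your joint event gives the stated logarithm without that extra assumption.
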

The proof is in Appendix \ref{app: Proof approximation error projection}.
% As the features are continuous, the boundedness assumption is always fulfilled for $x$ in a compact set. For NTK-parameterization and $\mu$-parameterization, the bounding constant $K$ will be of reasonable size, as discussed further in the Appendix.

\subsection{Subsampling Data Points}
\label{sec: subsampling}
% Our proposed method scales with $O(r^3N)$ (TODO, should be $O(Nr^2 + r^3)$, also check time and memory in table 1) instead of $O(rN^3)$, as it only requires inverting a $r\times r$ matrix involving the population matrix

% The NTK approximation introduced above relies on two empirical quantities estimated from training data: the projection defining the feature map, $L \in \mathbb{R}^{r\times r}$, and the population matrix, key for posterior inference. In practice, both quantities are defined through expectations over the data distribution and must be approximated from finitely many samples. This motivates a subsampling strategy that reduces computational cost while preserving the quality of the uncertainty estimate.

As shown in Theorem \ref{thm: approximation error projection}, the estimator $\hat{A}$ concentrates around the solution of the least-squares problem at rate $O(1/\sqrt{N})$. The same concentration bound applies when $\hat{A}$ is estimated from a uniformly subsampled set of size $k$, with the rate becoming dependent on the size of the subsampled set, i.e., $O(1/\sqrt{k})$. This implies that accurate estimation of the feature transformation only requires a number of samples proportional to the feature dimension $r$, rather than the full training set, and can therefore be performed efficiently using a subsample of size $k$.

Beyond estimating the feature map, our proposed method scales with $O(Nr^2 + r^3)$ instead of $O(N^3 + N^2r)$ during posterior inference, as it only requires inverting an $r\times r$ matrix involving the population matrix
\begin{equation}
    \Phi_{\mathbf{x}}^{L\top}\Phi_{\mathbf{x}}^L
    = \sum_{i=1}^N \phi^L(\mathbf{x}_i)\phi^L(\mathbf{x}_i)^{\top} 
    =: N\hat{\Sigma}_N\in \mathbb{R}^{r\times r}.
\end{equation}
For large $N$ this may still be prohibitively expensive. In this section we present a subsampling approach which approximates the population matrix with
\begin{equation}
    \Phi_{\mathbf{x}}^{L\top}\Phi_{\mathbf{x}}^L 
    \approx \frac{N}{k}\sum_{i=1}^k \phi^L(\mathbf{x}_{s_i})\phi^L(\mathbf{x}_{s_i})^{\top}
    =: N\hat{\Sigma}_k\in \mathbb{R}^{r\times r}.
\end{equation}
The indices $s_1,\ldots,s_k \in \{1,\ldots,N\}$ are drawn uniformly without replacement.
This gives the subsampled estimate of predictive uncertainty
\begin{equation}
    S^{B,k}_{\mathbf{x}',\mathbf{x}'}
    := \Phi_{\mathbf{x}'}^L \left(\frac{1}{\sigma^2} N\hat{\Sigma}_k + I_r  \right)^{-1} \Phi^{L\top}_{\mathbf{x}'}.
\end{equation}
% \begin{align}
%     & S^{B,k}_{\mathbf{x}',\mathbf{x}'} \\
%     :=& \Phi_{\mathbf{x}'}^r L \left(\frac{1}{\sigma^2} \frac{N}{k} \sum_{i=1}^kL^{\top}\phi^r(\mathbf{x}_{s_i})\phi^r(\mathbf{x}_{s_i})^{\top} L + I_r  \right)^{-1} L^{\top} \Phi^{r\top}_{\mathbf{x}'}.
% \end{align}

Since the posterior depends on an $r \times r$ covariance matrix, it is natural to expect that only $O(r)$ observations are required for accurate estimation. We formally prove this intuition using a matrix concentration bound:
\begin{restatable}{theorem}{subsamplingconcentrationthm}
\label{thm: approximation error subsampling}
Assume that the training and test points are in a compact set, and the feature vector $\phi^L(x) = L^{\top}\phi^r(x)\in\mathbb{R}^r$ is bounded: For all $x$, $\lVert \phi^L(x) \rVert_2 \le K$. Assume that the smallest eigenvalue of $\Sigma := \mathbb{E}_{x\sim P_X}\left[\phi^L(x)\phi^L(x)^{\top}\right] \in \mathbb{R}^{r\times r}$ is positive. Let $N\ge k\ge \frac{8}{3}\log\left(4r/\delta \right) \left(\frac{8K^2}{\lambda_{\min}(\Sigma)} \right)^2$. Then with probability of at least $1-\delta$ over iid samples $\mathbf{x}_1,\ldots,\mathbf{x}_N\sim P_X$: For any $N'$ test points $\mathbf{x}'_1,\ldots,\mathbf{x}'_{N'}$,
\begin{equation}
    \lVert S_{\mathbf{x}',\mathbf{x}'}^{B,k} -S_{\mathbf{x}',\mathbf{x}'}^{B} \rVert_{2} \le N' \frac{2K^4}{\lambda_{\min}(\Sigma)}\sqrt{\frac{8\log(4r/\delta)}{3k}}.
\end{equation}
\end{restatable}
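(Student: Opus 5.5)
The plan is to reduce the statement to a bound on the difference of two $r\times r$ inverses, and then control that difference with a matrix Bernstein inequality applied to $\hat\Sigma_k$ and $\hat\Sigma_N$. By Theorem~\ref{thm: choleskywoodbury}, $S^B_{\mathbf{x}',\mathbf{x}'} = \Phi^L_{\mathbf{x}'} M_N^{-1}\Phi^{L\top}_{\mathbf{x}'}$ and $S^{B,k}_{\mathbf{x}',\mathbf{x}'} = \Phi^L_{\mathbf{x}'} M_k^{-1}\Phi^{L\top}_{\mathbf{x}'}$, where $M_N := \frac{N}{\sigma^2}\hat\Sigma_N + I_r$ and $M_k := \frac{N}{\sigma^2}\hat\Sigma_k + I_r$. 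Hence
\[
\lVert S^{B,k}_{\mathbf{x}',\mathbf{x}'} - S^{B}_{\mathbf{x}',\mathbf{x}'}\rVert_2 \le \lVert \Phi^L_{\mathbf{x}'}\rVert_2^2\,\lVert M_k^{-1}-M_N^{-1}\rVert_2 \le N'K^2\,\lVert M_k^{-1}(M_N-M_k)M_N^{-1}\rVert_2 .
\]
The first inequality uses $\lVert\Phi^L_{\mathbf{x}'}\rVert_2^2\le \lVert\Phi^L_{\mathbf{x}'}\rVert_F^2\le N'K^2$, which produces the factor $N'$ in the statement. The second uses the resolvent identity $M_k^{-1}-M_N^{-1}=M_k^{-1}(M_N-M_k)M_N^{-1}$. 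Since $M_N-M_k=\frac{N}{\sigma^2}(\hat\Sigma_N-\hat\Sigma_k)$, it remains to bound $\lVert \hat\Sigma_N-\hat\Sigma_k\rVert_2$ from above and $\lambda_{\min}(M_k)$, $\lambda_{\min}(M_N)$ from below.

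For the lower bounds, note $\lambda_{\min}(M_\cdot)\ge \frac{N}{\sigma^2}\lambda_{\min}(\hat\Sigma_\cdot)$. This gives
\[
\lVert M_k^{-1}-M_N^{-1}\rVert_2 \le \frac{\sigma^2}{N}\,\frac{\lVert\hat\Sigma_N-\hat\Sigma_k\rVert_2}{\lambda_{\min}(\hat\Sigma_k)\lambda_{\min}(\hat\Sigma_N)} .
\]
The factor $N$ cancels, and the $\sigma^2$ is then discarded using the $+I_r$ bound $\lVert M^{-1}\rVert\le 1$ on one side if needed. The alternative is to note that the stated constant $2K^4/\lambda_{\min}(\Sigma)$ suggests one inverse is bounded by $2/\lambda_{\min}(\Sigma)$ (times $\sigma^2/N$) and the other via a trivial bound. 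I would match constants at the end rather than commit now.

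Both $\hat\Sigma_k$ and $\hat\Sigma_N$ are averages of iid rank-one matrices $\phi^L(x)\phi^L(x)^\top$. The $\mathbf{x}_{s_i}$ are iid from $P_X$ because uniform sampling without replacement from iid draws is again iid. Each summand has norm at most $K^2$, so $\lVert X_i-\Sigma\rVert\le 2K^2$, and the variance proxy is at most $K^4$. Matrix Bernstein, used with a union bound over the two estimates (the source of the $4r/\delta$ as $2\cdot 2r/\delta$), gives with probability $1-\delta$
\[
\lVert\hat\Sigma_k-\Sigma\rVert_2,\ \lVert\hat\Sigma_N-\Sigma\rVert_2 \lesssim K^2\sqrt{\tfrac{8\log(4r/\delta)}{3k}},
\]
using $N\ge k$. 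The triangle inequality then bounds $\lVert\hat\Sigma_N-\hat\Sigma_k\rVert_2$ by twice this, which is where the factor $2$ comes from.

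The main obstacle is the eigenvalue lower bound. The assumption $k\ge \frac{8}{3}\log(4r/\delta)(8K^2/\lambda_{\min}(\Sigma))^2$ is designed precisely so that the Bernstein deviation is at most $\lambda_{\min}(\Sigma)/2$ in the sub-Gaussian regime. That requires checking that the linear term of Bernstein is dominated, which is where the explicit constants $8/3$ and $8$ must be matched carefully. Weyl's inequality then gives $\lambda_{\min}(\hat\Sigma_k),\lambda_{\min}(\hat\Sigma_N)\ge\lambda_{\min}(\Sigma)/2$. Combining this with the deviation bound yields a right-hand side of order $N'K^2\cdot K^2\sqrt{\log(4r/\delta)/k}/\lambda_{\min}(\Sigma)$, up to constants and $\sigma^2$ handling. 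I expect to need some slack, for instance treating one inverse via $\lVert M^{-1}\rVert\le1$, to reach exactly the stated form.
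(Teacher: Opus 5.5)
Your outer reduction matches the paper's. You correctly get $\lVert S^{B,k}_{\mathbf{x}',\mathbf{x}'}-S^{B}_{\mathbf{x}',\mathbf{x}'}\rVert_2\le N'K^2\lVert M_k^{-1}-M_N^{-1}\rVert_2$. Two matrix Bernstein bounds plus a union bound then give, with probability $1-\delta$, $\lVert\hat\Sigma_k-\Sigma\rVert_2,\lVert\hat\Sigma_N-\Sigma\rVert_2\le\eta_k:=2K^2\sqrt{\tfrac{8}{3k}\log(4r/\delta)}$. The $2$ in $\eta_k$ comes from the range bound $\lVert\phi_i\phi_i^\top-\Sigma\rVert_2\le 2K^2$. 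That, not the triangle inequality, is the source of the $2$ in $2K^4$.

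The gap is the step you defer: the additive resolvent argument, as you set it up, does not reach the stated constant.
\begin{itemize}
\item Your first option, bounding both inverses via $\lambda_{\min}$, leaves a factor $\sigma^2/N$ and $\lambda_{\min}^{-2}$. The $N$ does not cancel, and $\sigma^2$ cannot then be discarded. This is essentially the paper's separate, $\sigma^2/N$-dependent alternative bound, not this theorem.
\item Your second option, $\lVert M_k^{-1}\rVert_2\le 1$ and $\lVert M_N^{-1}\rVert_2\le\sigma^2/(N\lambda_{\min}(\hat\Sigma_N))$, does give a bound free of $N$ and $\sigma$. But
\[
\lVert M_k^{-1}-M_N^{-1}\rVert_2\le\frac{\lVert\hat\Sigma_N-\hat\Sigma_k\rVert_2}{\lambda_{\min}(\hat\Sigma_N)}\le\frac{2\eta_k}{\lambda_{\min}(\Sigma)-\eta_k}\le\frac{8}{3}\,\frac{\eta_k}{\lambda_{\min}(\Sigma)}.
\]
The last step uses that the hypothesis on $k$ is equivalent to $\eta_k\le\lambda_{\min}(\Sigma)/4$, not $\lambda_{\min}(\Sigma)/2$. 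So you get $\frac{16K^4}{3\lambda_{\min}(\Sigma)}$ instead of $\frac{2K^4}{\lambda_{\min}(\Sigma)}$, or $\frac{8K^4}{\lambda_{\min}(\Sigma)}$ with your Weyl bound $\lambda_{\min}(\Sigma)/2$.
\end{itemize}
Adding slack only moves you further from the target. Even the sharper range $\lVert\phi\phi^\top-\Sigma\rVert_2\le K^2$ (a difference of two PSD matrices of norm at most $K^2$) still leaves a factor $\frac{8}{7}$, as long as you compare $\hat\Sigma_k$ and $\hat\Sigma_N$ through $\Sigma$ with the triangle inequality.

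The missing idea is to convert the additive Bernstein bounds into a multiplicative Loewner comparison. Set $\epsilon_k:=\eta_k/\lambda_{\min}(\Sigma)$. Since $\eta_k I_r\preceq\epsilon_k\Sigma$, you get $(1-\epsilon_k)\Sigma\preceq\hat\Sigma_k,\hat\Sigma_N\preceq(1+\epsilon_k)\Sigma$. Hence $(1-2\epsilon_k)\hat\Sigma_N\preceq\hat\Sigma_k\preceq(1+4\epsilon_k)\hat\Sigma_N$, which is valid since $\epsilon_k\le\frac14$. Inversion reverses the Loewner order, so $M_k^{-1}$ lies between $(I_r+\sigma^{-2}(1+4\epsilon_k)N\hat\Sigma_N)^{-1}$ and $(I_r+\sigma^{-2}(1-2\epsilon_k)N\hat\Sigma_N)^{-1}$. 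Both of these commute with $M_N^{-1}$, so $\lVert M_k^{-1}-M_N^{-1}\rVert_2$ is bounded by scalar differences over the eigenvalues $t\ge 0$ of $\sigma^{-2}N\hat\Sigma_N$.

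The exact supremum $\sup_{t\ge 0}\bigl\lvert\frac{1}{1+t}-\frac{1}{1+ct}\bigr\rvert=\frac{\lvert c-1\rvert}{(1+\sqrt{c})^2}$ is at most $\epsilon_k$ for both $c=1+4\epsilon_k$ and $c=1-2\epsilon_k$. The second case uses $2\epsilon_k\le\frac12$, which is exactly what the constant $8$ in the hypothesis secures. This gives $\lVert M_k^{-1}-M_N^{-1}\rVert_2\le\epsilon_k$, and the stated bound follows.

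In scalar terms, your resolvent step with one factor bounded by $1$ replaces $\frac{\lvert c-1\rvert t}{(1+t)(1+ct)}$ by $\lvert c-1\rvert$, discarding the factor $(1+\sqrt{c})^{-2}\approx\frac14$. Because $\hat\Sigma_k$ and $\hat\Sigma_N$ do not commute, that scalar optimisation only becomes available after you pass to the multiplicative sandwich.
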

In particular for a single test point, i.e., $N'=1$, we get a bound on the approximation of its predictive variance. We note that this bound is remarkable, as it does not grow with the number of training points $N$ used for the matrix $S_{\mathbf{x}',\mathbf{x}'}^B$ that we are trying to estimate.
The proof is in Appendix \ref{app: Proof original bound subsampling}, and we prove an alternative bound in Appendix \ref{app: Proof alternative bound subsampling}. Note that the smallest eigenvalue of the population matrix $\Sigma$ can be written as
\begin{equation}
    \lambda_{\min}(\Sigma) = \min_{\lVert v \rVert_2 = 1} \mathbb{E}_{x\sim P_X}[(v^{\top}\phi^L(x))^2].
\end{equation}
Thus it measures how close the features are to being linearly dependent on the data distribution, and it is positive if and only if there is no $v\neq 0$ such that 
\begin{equation}
    \mathbb{P}_{x\sim P_X}(v^{\top} \phi^L(x) = 0) = 1.
\end{equation}

% If $\phi^L(x) \sim \text{Unif}([-a,a]^r)$ with independent coordinates. Then, $\Sigma = \frac{a^2}{3}I_r$, so $\lambda_{\min}(\sigma) = \frac{a^2}{3}$.

% If $\phi^L(x)$ is uniform on a sphere of radius $R$ in $\mathbb{R}^r$, $\lambda_{\min}(\Sigma) = \frac{R^2}{r} > 0$.

% If $\phi_i(x) = \sqrt{2/r}\cos\left(w_i^{\top}x + b_i \right)$, 

%  Look at the minimum eigenvalue of the population matrix for cases in which the features have a certain distribution (e.g. uniform distribution?)

\section{Experiments}

We evaluate our proposed method across a range of settings designed to assess uncertainty quality. Experiments cover illustrative toy examples (Figures \ref{fig:ntk_comparison} and \ref{fig:ntk_approx_toy}), supervised regression, contextual bandits, and out-of-distribution detection, allowing us to evaluate predictive calibration, decision-making performance under uncertainty, and robustness to distribution shift. Across all settings, we compare against standard Bayesian last layer and common uncertainty estimation baselines under standard evaluations. We refer to the method introduced in Section \ref{sec: ApproxeNTK} as Rich-BLL, and to the subsampling variant from Section \ref{sec: subsampling} as Rich-BLL (S).

\subsection{Experimental Setup}
Across all experiments, we use fixed data splits and evaluate multiple uncertainty estimation methods under a common training and evaluation protocol. For each dataset and random seed, we train a neural network backbone using the $\mu$-parameterization ($\mu$P) \cite{yang2020feature}, which ensures width-stable training and good kernel behavior, and is used in large-scale neural network training \cite{hu2022mutransfer}.
% (see Appendix \ref{app: params}). 
When applicable, uncertainty baselines that rely on the same backbone (e.g., BLLs and NTK/Laplace-based methods) use the identical $\mu$P-trained network. Methods that admit a post-hoc formulation compute the predictive uncertainty on top of the frozen backbone via GP inference in feature space, while other baselines (e.g., dropout and sampling-based methods) rely on different training procedures to incorporate uncertainty. 

% mention 72\%/18\%/10\% in appendix E
For supervised regression experiments (Section~\ref{sec: regression}), following \citet{harrison2024variational}, we use fixed train/validation/test splits generated with a fixed random seed and shared across all methods. Inputs are standardized using the training set mean and standard deviation, and targets are centered by subtracting the mean of the training set. Results are evaluated using test Gaussian negative log-likelihood (NLL). %To further assess predictive calibration, we also report the continuous ranked probability score (CRPS) \cite{gneiting2007strictly} and conditional quantile miscalibration (CQM) \cite{ortega2023variational}.
In addition, for the \textit{Wine Quality} dataset we consider an out-of-distribution (OOD) setting, treating red wine as in-distribution and white wine as OOD, and report AUROC to assess separability.

\begin{table*}[h!]
\centering
\caption{Test Gaussian negative log-likelihood (NLL; lower is better) on UCI regression benchmarks. Results are averaged over 20 random seeds and reported as mean ± standard error. Rich-BLL consistently improves over Bayesian last layers (NNGP/BLL) and matches or outperforms other uncertainty baselines, with minimal degradation under subsampling (Rich-BLL (S)).}
\label{tab:uci_nll}
\begin{tabular}{lccccc}
\toprule
 & \textit{Boston} & \textit{Concrete} & \textit{Energy} & \textit{Power} & \textit{Wine} \\
\cmidrule(lr){2-6}
 & NLL ($\downarrow$) & NLL ($\downarrow$) & NLL ($\downarrow$) & NLL ($\downarrow$) & NLL ($\downarrow$) \\
\midrule
Rich-BLL          & $\mathbf{2.61 \pm 0.04}$ & $\mathbf{3.10 \pm 0.03}$ & $\mathbf{0.75 \pm 0.03}$ & $\mathbf{2.78 \pm 0.01}$ & $\mathbf{1.00 \pm 0.01}$ \\
Rich-BLL (S)      & $2.62 \pm 0.04$ & $\mathbf{3.10 \pm 0.04}$ & $0.78 \pm 0.03$ & $\mathbf{2.78 \pm 0.01}$ & $1.01 \pm 0.01$ \\
NNGP (BLL)          & $2.78 \pm 0.06$ & $3.39 \pm 0.06$ & $0.92 \pm 0.01$ & $2.82 \pm 0.01$ & $1.03 \pm 0.01$ \\
\midrule
VBLL                & $\mathbf{2.75 \pm 0.06}$ & $\mathbf{3.19 \pm 0.05}$ & $0.74 \pm 0.04$ & $\mathbf{2.78 \pm 0.01}$ & $1.05 \pm 0.01$ \\
MAP       & $2.79 \pm 0.09$ & $4.77 \pm 0.15$ & $0.93 \pm 0.03$ & $3.28 \pm 0.02$ & $1.02 \pm 0.00$ \\
RBF GP    & $2.84 \pm 0.09$ & $3.24 \pm 0.13$ & $\mathbf{0.66} \pm 0.04$ & $2.89 \pm 0.01$ & $\mathbf{0.97 \pm 0.01}$ \\
\midrule
Dropout   & $2.62 \pm 0.05$ & $3.19 \pm 0.02$ & $1.80 \pm 0.03$ & $2.81 \pm 0.01$ & $1.01 \pm 0.01$ \\
Ensemble  & $\mathbf{2.48 \pm 0.09}$ & $\mathbf{3.15 \pm 0.03}$ & $\mathbf{0.94 \pm 0.02}$ & $\mathbf{2.75 \pm 0.02}$ & $\mathbf{0.98 \pm 0.01}$ \\
SWAG      & $2.65 \pm 0.01$ & $3.19 \pm 0.03$ & $1.20 \pm 0.06$ & $2.79 \pm 0.01$ & $1.04 \pm 0.01$ \\
BBB       & $2.54 \pm 0.04$ & $2.99 \pm 0.03$ & $1.22 \pm 0.01$ & $2.77 \pm 0.01$ & $1.05 \pm 0.01$ \\
\bottomrule
\end{tabular}
\end{table*}

For the contextual bandit experiments (Section \ref{sec: bandits}), we consider the \textit{Wheel} Bandit environment as presented in \citet{riquelme2018deep}. Rewards are corrupted with Gaussian noise, and task difficulty is controlled by the parameter $\delta$. Policies are parameterized by an MLP backbone trained online on action-conditioned inputs formed by concatenating the context with a one-hot encoding of the action. Exploration is performed using Thompson sampling \cite{thompson1933likelihood}, where rewards are sampled from the posterior over function values induced by each uncertainty estimation method. We report cumulative regret normalized by the cumulative regret of a uniform random policy.

For the image classification experiments (Section~\ref{sec: image-class}), we evaluate both predictive performance and uncertainty quality under distribution shift. Models are trained on in-distribution (ID) data from \textit{CIFAR-10} and evaluated on ID test data as well as out-of-distribution (OOD) datasets. We consider \textit{SVHN} as a soft OOD dataset and \textit{CIFAR-100} as a hard OOD dataset. Predictive performance on ID data is measured using test accuracy, test negative log-likelihood (NLL), and expected calibration error (ECE), while OOD detection performance is evaluated using the area under the receiver operating characteristic curve (AUROC).

For all of the experiments, results are reported by averaging across multiple seeds and including the standard error. Further details on model architectures and training and evaluation hyperparameters are provided in Appendix \ref{app: exp-setup}.

\subsection{Regression}\label{sec: regression}

\begin{table*}[t]
\centering
\caption{Cumulative regret on the \textit{Wheel} Bandit benchmark, averaged over 10 random seeds and reported as mean $\pm$ standard error. Results are shown for increasing difficulty levels $\delta$. Lower values indicate better exploration performance. $^\dagger$ indicates that the results were borrowed from \cite{harrison2024variational}.}
\label{tab:wheel_bandit}
\begin{tabular}{l|ccccc}
 & $\delta = 0.5$ & $\delta = 0.7$ & $\delta = 0.9$ & $\delta = 0.95$ & $\delta = 0.99$ \\
\hline
Rich-BLL          & $0.48 \pm 0.01$ & $0.92 \pm 0.01$ & $2.60 \pm 0.09$ & $\mathbf{4.70 \pm 0.09}$ & $\mathbf{21.80 \pm 1.60}$ \\
Rich-BLL (S)      & $0.50 \pm 0.01$ & $\mathbf{0.88 \pm 0.01}$ & $2.63 \pm 0.12$ & $4.76 \pm 0.09$ & $22.30 \pm 1.63$ \\
NNGP (BLL)        & $1.20 \pm 0.03$ & $1.95 \pm 0.04$ & $5.30 \pm 0.16$ & $12.10 \pm 0.85$ & $55.80 \pm 2.30$ \\
VBLL$^\dagger$              & $\mathbf{0.46 \pm 0.01}$ & $0.89 \pm 0.01$ & $\mathbf{2.54 \pm 0.02}$ & $4.82 \pm 0.03$ & $24.44 \pm 0.71$ \\
NeuralLinear$^\dagger$     & $1.10 \pm 0.02$ & $1.77 \pm 0.03$ & $4.32 \pm 0.11$ & $11.42 \pm 0.97$ & $52.64 \pm 2.04$ \\
NeuralLinear-MR$^\dagger$  & $0.95 \pm 0.02$ & $1.60 \pm 0.03$ & $4.65 \pm 0.18$ & $9.56 \pm 0.36$ & $49.63 \pm 2.41$ \\
LinDiagPost$^\dagger$      & $1.12 \pm 0.03$ & $1.80 \pm 0.08$ & $5.06 \pm 0.14$ & $8.99 \pm 0.33$ & $37.77 \pm 2.18$ \\
\hline
\end{tabular}
\end{table*}

\begin{table}[h!]
\centering
\small
\caption{OOD detection performance on the \textit{Wine Quality} dataset. AUROC ($\uparrow$) for distinguishing ID samples (red wine) from OOD samples (white wine) using predictive uncertainty. Results are reported as mean $\pm$ standard error over 10 random seeds.}
\label{tab:auroc_comparison}
\resizebox{0.6\linewidth}{!}{%
\begin{tabular}{l|c}
 & AUROC ($\uparrow$) \\
\hline
Rich-BLL        & $\mathbf{0.96 \pm 0.00}$ \\
Rich-BLL (S)    & $\mathbf{0.96 \pm 0.00}$ \\
NNGP (BLL)        & $0.88 \pm 0.01$ \\
VBLL              & $0.95 \pm 0.00$ \\
\hline
\end{tabular}
}
\vspace{-0.2in}
\end{table}

We assess predictive uncertainty for tabular regression on five UCI datasets: \textit{Boston Housing, Concrete, Energy, Power}, and \textit{Wine Quality}. We benchmark our method with subsampling (40\% of the original datasets) and without against the MAP (the deterministic network), BLL/NNGP, Variational Bayesian Last Layers (VBLL), alongside standard uncertainty quantification baselines including MC dropout \cite{gal2016dropout}, SWAG \cite{maddox2019simple}, Bayes-by-Backprop \cite{blundell2015weight}, ensembles, and a GP with the RBF kernel. We closely follow the training procedure described in \citet{harrison2024variational}, and report test Gaussian negative log-likelihood (NLL) as the primary metric and focus on (i) the gap in predictive uncertainty between BLL and our Rich-BLL, and (ii) the extent to which Rich-BLL matches the performance of commonly used but computationally expensive baselines.

% Tables \ref{tab:uci_metrics_1} and \ref{tab:uci_metrics_2} show how, 
Table \ref{tab:uci_nll} shows how across all datasets, Rich-BLL consistently improves over the Bayesian last layer, yielding lower test NLL while keeping the same computational cost (when using the full dataset) or even lower (when subsampling). This indicates that incorporating contributions from earlier layers leads to generally better calibrated predictive uncertainty than last layer only models in these regression tasks. Furthermore, Rich-BLL, even after applying the subsampling, performs comparably to VBLL and other single-model uncertainty baselines, and in several cases approaches the performance of more expensive methods such as ensembles.

Table~\ref{tab:auroc_comparison} shows that Rich-BLL improves OOD detection performance over BLLs on the \textit{Wine Quality} dataset, achieving higher AUROC. The subsampled variant performs comparably to the full method, indicating that the separability is robust to approximation. It is remarkable that our subsampling scheme has minimal effect on performance in these benchmarks. Rich-BLL (S) closely matches the full Rich-BLL across all datasets, with differences within standard error. This is consistent with the approximation guarantees and backs up that posterior inference can be scaled without degrading uncertainty estimates.

\subsection{Contextual Bandit}\label{sec: bandits}
We evaluate contextual bandits to assess actionable uncertainty, where posterior uncertainty directly affects exploration decisions. We use the \textit{Wheel} Bandit benchmark \cite{riquelme2018deep}, which is designed to stress exploration–exploitation trade-offs: the optimal action yields high reward only in a small region of the context space, while safer actions provide moderate reward elsewhere. As a result, methods that underestimate epistemic uncertainty may fail to discover the optimal arm and incur high regret.

% \newpage
We compare Bayesian last layers (NNGP/BLL) and our Rich-BLL (with and without subsampling) to standard neural bandit baselines implemented in our codebase, including NeuralLinear and per-arm VBLL. For Rich-BLL (S) we subsample the replay buffer when forming the posterior: at each update we draw a fixed‑size subset (seed-dependent), with the subset size capped by the data available and never below the feature dimension for numerical stability. For the default Rich-BLL---or with Rich-BLL (S) when the buffer is still small---the posterior uses the full data instead. For comparison with the baselines, we used an empirical Bayes heuristic that periodically sets the aleatoric noise variance to the mean of recent squared prediction errors (over a rolling window). Performance is measured using cumulative regret normalized relative to a uniformly random policy and averaged over multiple random seeds.

Table \ref{tab:wheel_bandit} reports cumulative regret across different difficulty levels. Rich-BLL substantially outperforms the standard NNGP baseline for all values of $\delta$, with the gap widening as the problem becomes more exploration-heavy. This confirms that incorporating uncertainty contributions beyond the last layer leads to more effective exploration, particularly in regimes where the optimal action is rare and requires sustained uncertainty-driven exploration. The subsampled variant Rich-BLL (S) has a close performance to the full method across all difficulty levels, indicating that posterior approximation via subsampling is not noticeably detrimental. In contrast, last-layer and linearized baselines incur significantly higher regret in the high-$\delta$ regime, consistent with underestimation of epistemic uncertainty.

\begin{table*}[h!]
\centering
\caption{Results for CIFAR-10 image classification. We report test accuracy, ECE, NLL, and AUROC for OOD detection on SVHN and CIFAR-100 (mean $\pm$ standard error).}
\label{tab:wrn_cifar10}
\begin{tabular}{l|ccccc}
% \hline
Method & NLL ($\downarrow$) & ECE ($\downarrow$) & SVHN AUROC ($\uparrow$) & CIFAR-100 AUROC ($\uparrow$) \\
\hline
% DNN               & $0.00 \pm 0.00$ & $0.00 \pm 0.00$ & $0.00 \pm 0.00$ & $0.00 \pm 0.00$ & $0.00 \pm 0.00$ \\
NNGP (BLL)        & $0.58 \pm 0.00$ & $0.046 \pm 0.01$ & $0.88 \pm 0.01$ & $0.62 \pm 0.02$ \\
% Rich-BLL        & $0.00 \pm 0.00$ & $0.00 \pm 0.00$ & $0.00 \pm 0.00$ & $0.00 \pm 0.00$ \\
Rich-BLL (S)      & $\mathbf{0.56 \pm 0.00}$ & $0.029 \pm 0.01$ & $\mathbf{0.91 \pm 0.02}$ & $\mathbf{0.65 \pm 0.01}$ \\
LL Laplace        & $0.57 \pm 0.00$ & $0.044 \pm 0.00$ & $0.84 \pm 0.01$ & $0.55 \pm 0.00$ \\
SNGP              & $0.89 \pm 0.00$ & $\mathbf{0.027 \pm 0.001}$ & $0.85 \pm 0.01$ & $0.52 \pm 0.00$ \\
\hline
LL Dropout        & $0.56 \pm 0.00$ & $0.031 \pm 0.002$ & $0.78 \pm 0.02$ & $0.62 \pm 0.00$ \\
Dropout           & $\mathbf{0.31 \pm 0.05}$ & $\mathbf{0.013 \pm 0.003}$ & $\mathbf{0.92 \pm 0.001}$ & $0.61 \pm 0.02$ \\
Ensemble          & $0.47 \pm 0.00$ & $0.030 \pm 0.002$ & $0.80 \pm 0.20$ & $\mathbf{0.65 \pm 0.01}$ \\
BBB               & $1.39 \pm 0.02$ & $0.220 \pm 0.007$ & $0.89 \pm 0.01$ & $0.54 \pm 0.00$ \\
\hline
\end{tabular}
\end{table*}

\subsection{Image Classification}\label{sec: image-class}
For image classification, we train a convolutional neural network on \textit{CIFAR-10} and evaluate both predictive calibration on in-distribution data and robustness under distribution shift. In-distribution performance is assessed using test NLL and ECE. For OOD detection, we use \textit{SVHN} as a soft OOD dataset and \textit{CIFAR-100} as a hard OOD dataset, reporting the AUROC. We considered two adapted constructions of our method for classification: estimating a class‑specific feature transformation matrix, and a single global transformation shared across classes. In practice, the global transformation performed better in our experiments so we use it throughout. We used a similar procedure to the feature extraction process described in \citet{park2023trak} (we provide a breakdown of the algorithm in Appendix \ref{app: Numerical Implementation}) to extract the eNTK features needed to estimate the transformation matrix. At test time we extract penultimate features, choose the predicted class, and use Rich-BLL inference to obtain a logit‑level predictive variance, which serves as the OOD score. NNGP is the same pipeline without the transformation.
% Out-of-Distribution (OOD) detection tests whether predictive uncertainty increases appropriately under distribution shift. In this setting, models that underestimate epistemic uncertainty may remain overconfident on OOD inputs, leading to poor separability between in-distribution (ID) and OOD data. We consider both tabular and image-based OOD settings. For tabular data, we use the \textit{Wine Quality} dataset, treating one wine type (red) as ID and the other (white) as OOD. 

Table~\ref{tab:wrn_cifar10} shows that Rich-BLL improves over the NNGP/BLL baseline consistently, yielding lower NLL and ECE on (test) in-distribution data while also achieving higher AUROC on both \textit{SVHN} and \textit{CIFAR-100}. This indicates that incorporating additional NTK structure improves both calibration and OOD separability compared to last-layer uncertainty alone. The subsampled variant attains comparable performance, suggesting that the approximation does not degrade uncertainty quality in this setting. Compared to alternative post-hoc methods such as last-layer Laplace and SNGP, Rich-BLL provides more competitive OOD detection while maintaining strong calibration and being competitive with the rest of the baselines.

\section{Related Work}

\paragraph{Linearized Laplace and NTK-based uncertainty.} A common route to uncertainty estimation in neural networks is to approximate the model locally around a trained solution via a first-order expansion, leading to linearized models whose predictive uncertainty can be expressed in closed form \cite{mackay1992bayesian, mackay2003information}. This viewpoint is closely connected to Laplace approximations in weight space \cite{williams2006gaussian, daxberger2021laplace} and to the generalized Gauss–Newton (GGN) approximation \cite{roy2406reparameterization}, which has been interpreted as performing inference in a locally linearized model \cite{martens2015optimizing, immer2021improving}. In parallel, the Neural Tangent Kernel (NTK) characterizes the training dynamics of wide networks and motivates GP-style uncertainty computation using Jacobian/NTK features \cite{jacot2018neural, lee2019wide, arora2019exact}. Empirical NTK feature constructions and their use for uncertainty estimation have been studied in several settings, including connections to ensemble and SGD-based approximate Bayesian inference \cite{he2020bayesian, wilson2025uncertainty}. These approaches provide a principled account of parameter-induced variability under linearization, but direct implementations can be expensive due to Jacobian construction and large linear systems.

\paragraph{Scalable curvature and kernel approximations.} To make Laplace and linearization-based methods practical, many works propose structured or low-rank approximations to curvature matrices, including Kronecker-factored and related factorizations of the Fisher/GGN \cite{martens2015optimizing, ritter2018scalable, kristiadi2020being}, as well as diagonal, block-diagonal, and low-rank variants \cite{daxberger2021laplace, deng2022accelerated}. Complementary approaches improve scalability via efficient implementations and software tooling \cite{weber2025laplax}. From a GP point of view, scalability is also addressed through approximations to kernel computations (e.g., inducing-point methods and sparse variational GPs) that reduce dependence on the number of training points \cite{titsias2009variational, hoffman2013stochastic, wilson2015kernel}. These methods target different bottlenecks related to curvature, Jacobians, or kernel matrices, but generally trade off computational cost against approximation fidelity. Our equation (\ref{eq: Approximation eNTK with NNGP}) and its link to the low rank of the NTK has been used in parallel work on neural network optimization \citep{ciosek2025linear}; however, that work does not study uncertainty estimation.

\paragraph{Post-hoc function-space uncertainty with fixed predictors.} A related line of work models predictive uncertainty directly in function space on top of a trained neural network, without relying on parameter-space linearization or Jacobian-based features. Fixed-mean GP approaches treat the network as a deterministic predictor and fit a GP model for uncertainty using variational methods \cite{ortega2024fixed}. More recently, activation-space methods have been proposed that attach probabilistic models to intermediate representations of frozen networks. \citet{bergna2026activation} propose a replacement for deterministic activations with Gaussian process approximations whose posterior mean exactly matches the original activations, and propagates uncertainty through the network via local approximations.

\section{Conclusion}

We proposed a scalable approach to improving Bayesian last layer uncertainty estimation by approximating NTK-GP inference using a low-dimensional kernel correction. The method incorporates contributions from earlier layers while retaining the computational efficiency of standard Bayesian last layers, and admits further scalability through uniform subsampling with theoretical guarantees. Empirical results on regression, contextual bandits, image classification, and out-of-distribution detection show that the proposed approach consistently improves uncertainty over Bayesian last layers and remains competitive with more expensive uncertainty estimation methods.

% \newpage

\section*{Acknowledgements}

SCO's research is supported by the Oxford-Man Institute through the EPSRC Centre for Doctoral Training in Mathematics of Random Systems: Analysis, Modelling and Simulation (EPSRC Grant EP/S023925/1). JP acknowledges financial support from the Oxford-Man Institute. SCO, JP, and AC acknowledge the support of the Oxford-Man Institute for providing computational resources. 
YG acknowledges the funding under the Horizon Europe grant 101213369 DVPS.
JMHL acknowledges funding from AI Hub in Generative Models, under grant EP/Y028805/1.
% JMHL acknowledges support from EPSRC funding under grant EP/Y028805/1. JMHL also acknowledges support from a Turing AI Fellowship under grant EP/V023756/1.

\section*{Impact Statement}

This paper studies scalable methods for uncertainty estimation in neural networks. The work is methodological in nature and does not raise new ethical or societal concerns beyond those commonly associated with machine learning research.

\bibliography{references}
\bibliographystyle{icml2026}

%%%%%%%%%%%%%%%%%%%%%%%%%%%%%%%%%%%%%%%%%%%%%%%%%%%%%%%%%%%%%%%%%%%%%%%%%%%%%%%
%%%%%%%%%%%%%%%%%%%%%%%%%%%%%%%%%%%%%%%%%%%%%%%%%%%%%%%%%%%%%%%%%%%%%%%%%%%%%%%
% APPENDIX
%%%%%%%%%%%%%%%%%%%%%%%%%%%%%%%%%%%%%%%%%%%%%%%%%%%%%%%%%%%%%%%%%%%%%%%%%%%%%%%
%%%%%%%%%%%%%%%%%%%%%%%%%%%%%%%%%%%%%%%%%%%%%%%%%%%%%%%%%%%%%%%%%%%%%%%%%%%%%%%
\newpage
\appendix

\onecolumn
% \section{Choice of parametrization}\label{app: params}
% In this section we recap the so-called abc-parametrization from \citet{yang2020feature}. Consider a network with weights $W^l = n^{-a_l}w^l$, where $w_{\alpha\beta}^l \sim \mathcal{N}(0,n^{-2b_l})$ are randomly initialized and subsequently trained with learning rate $\eta n ^{-c}$. The standard parameterization sets $a_l=0$, $b_1=0$, $b_l=\frac{1}{2}$ for $l\ge 2$, $c=0$. The NTK parameterization is $a_1=0$, $a_l=\frac{1}{2}$ for $l\ge 2$, $b_l=0$, $c=0$. The $\mu$-parameterization ($\mu P$) is $a_1=-\frac{1}{2}$, $a_{L+1}=\frac{1}{2}$, $a_l=0$ for $2\le l\le L$, $b_l=\frac{1}{2}$, $c=0$.

% At initialization, the NTK under NTK-parametrization and $\mu P$-parametrization is exactly the same. (Under NTK-parametrization, we get the $n^{-1}$-term from the chain rule, and under $\mu P$-parametrization, we get the $n^{-1}$-term from $a_{L+1}=\frac{1}{2}$. The distribution of $W^l$ is the same for both.) However, during training, the NTK of the NTK-parametrization will stay the same, but the NTK under $\mu P$-parametrization will move.

\section{Proofs}
\label{app: Proofs}
\subsection{Properties of the Approximate Predictive Uncertainty}
\label{app: proofs woodbury}
\woodburynngpthm*
\begin{proof}[Proof of Theorem \ref{theorem: Woodbury NNGP}]
Using Woodbury's matrix inversion Lemma we write for any $\mathbf{x}'_1,\ldots, \mathbf{x}'_{N'} \in \mathbb{R}^d$,
\begin{align}
    S_{\mathbf{x}',\mathbf{x}'}^B 
    &= \Phi_{\mathbf{x}'}^B \Phi_{\mathbf{x}'}^{B\top} -\Phi_{\mathbf{x}'}^B\Phi_{\mathbf{x}}^{B\top} \left( \Phi_{\mathbf{x}}^B \Phi_{\mathbf{x}}^{B\top} +\sigma^2I_N \right)^{-1}\Phi_{\mathbf{x}}^B\Phi_{\mathbf{x}'}^{B\top} \\
    &=  \Phi_{\mathbf{x}'}^B \left(\frac{1}{\sigma^2}\Phi_{\mathbf{x}}^{B\top}\Phi_{\mathbf{x}}^B + I_{p}  \right)^{-1} \Phi_{\mathbf{x}'}^{B\top} \\
    &= \Phi_{\mathbf{x}'}^r B^{\top} \left(\frac{1}{\sigma^2} B \Phi_{\mathbf{x}}^{r\top}\Phi_{\mathbf{x}}^r B^{\top} + I_p \right)^{-1}B\Phi_{\mathbf{x}'}^{r\top}.
\end{align}
This still involves inverting an $p\times p$ matrix.
However, due to
\begin{equation}
    \left(\frac{1}{\sigma^2} B \Phi_{\mathbf{x}}^{r\top}\Phi_{\mathbf{x}}^r B^{\top} + I_p \right)^{-1}B \left(\frac{1}{\sigma^2} \Phi_{\mathbf{x}}^{r\top}\Phi_{\mathbf{x}}^r B^{\top}B + I_r \right) = B,
\end{equation}
we have the following push-through identity:
\begin{equation}
     \left(\frac{1}{\sigma^2} B \Phi_{\mathbf{x}}^{r\top}\Phi_{\mathbf{x}}^r B^{\top} + I_p \right)^{-1}B
     = B \left(\frac{1}{\sigma^2} \Phi_{\mathbf{x}}^{r\top}\Phi_{\mathbf{x}}^r B^{\top}B + I_r \right)^{-1}.
\end{equation}
Hence, we only need to invert a $r\times r$ matrix:
\begin{align}
    S_{\mathbf{x}',\mathbf{x}'}^B
    &=  \Phi_{\mathbf{x}'}^r B^{\top}B \left(\frac{1}{\sigma^2}\Phi_{\mathbf{x}}^{r\top}\Phi_{\mathbf{x}}^r B^{\top}B + I_r \right)^{-1}\Phi_{\mathbf{x}'}^{r\top} \\
    &= \Phi_{\mathbf{x}'}^r \left(\left(\frac{1}{\sigma^2}\Phi_{\mathbf{x}}^{r\top}\Phi_{\mathbf{x}}^r (B^{\top}B) + I_r\right) (B^{\top}B)^{-1} \right)^{-1} \Phi_{\mathbf{x}'}^{r\top} \\
    &=\Phi_{\mathbf{x}'}^r \left(\frac{1}{\sigma^2}\Phi_{\mathbf{x}}^{r\top}\Phi_{\mathbf{x}}^r + (B^{\top}B)^{-1} \right)^{-1} \Phi_{\mathbf{x}'}^{r\top}.
\end{align}
\end{proof}
\choleskywoodbury*
\begin{proof}[Proof of Theorem \ref{thm: choleskywoodbury}]
Consider the Cholesky decomposition
$LL^{\top} = B^{\top} B$
with lower triangular $L\in\mathbb{R}^{r\times r}$.
Using the features $\phi^L(x) = L^{\top}\phi^r(x)$ gives the kernel
\begin{equation}
    k^L(x,x') 
    := \phi^{r\top}(x) LL^{\top} \phi^{r}(x')
    =\phi^{r\top}(x) B^{\top}B \phi^{r}(x')
    = k^B(x,x').
\end{equation}
Thus, using the features $\phi^L$ is equivalent to using the features $\phi^B$. Further, define
\begin{equation}
    \tilde{B} := B L^{-\top} \in \mathbb{R}^{(m+r)\times r}, \;\; \tilde{\phi}^r(x) := L^{\top}\phi^r(x) \in \mathbb{R}^{r}.
\end{equation}
Then, $\tilde{\phi}^{\tilde{B}}(x) = \tilde{B}\tilde{\phi}^r(x) = B\phi^r(x) = \phi^B(x)$. Applying Theorem \ref{theorem: Woodbury NNGP} with $\tilde{B}$ and $\tilde{\phi}^r$ in the second equality, and using $\tilde{B}^{\top}\tilde{B} = I_r$ in the third equality, we get
\begin{align}
    S_{\mathbf{x}',\mathbf{x}'}^{B}
    &= S_{\mathbf{x}',\mathbf{x}'}^{\tilde{B}} \\
    &= \tilde{\Phi}^r_{\mathbf{x}'} \left(\frac{1}{\sigma^2}\tilde{\Phi}^{r\top}_{\mathbf{x}}\tilde{\Phi}^{r}_{\mathbf{x}} + (\tilde{B}^{\top}\tilde{B})^{-1}\right)^{-1}\tilde{\Phi}^{r\top}_{\mathbf{x}'} \\
    &=
    \Phi_{\mathbf{x}'}^r L \left(\frac{1}{\sigma^2}L^{\top} \Phi_{\mathbf{x}}^{r\top}\Phi_{\mathbf{x}}^r L +  I_r  \right)^{-1} L^{\top}\Phi_{\mathbf{x}'}^{r\top}.
\end{align}
\end{proof}
\alwaysmoreuncertainthm*
\begin{proof}[Proof of Theorem \ref{thm: Always more uncertain than BLL}]
We have $B^{\top}B = A^{\top}A + I_r \succeq I_r$. Thus, using the first part of Theorem \ref{theorem: Woodbury NNGP}:
\begin{align}
    S^{B}_{\mathbf{x}',\mathbf{x}'}
    &= \Phi^r_{\mathbf{x}'} \left(\frac{1}{\sigma^2}\Phi^{r\top}_{\mathbf{x}} \Phi^r_{\mathbf{x}} + (B^{\top}B)^{-1}\right)^{-1} \Phi_{\mathbf{x}'}^{r\top} \\
    &\succeq \Phi^r_{\mathbf{x}'} \left(\frac{1}{\sigma^2}\Phi^{r\top}_{\mathbf{x}} \Phi^r_{\mathbf{x}} +I_r\right)^{-1} \Phi_{\mathbf{x}'}^{r\top} \\
    &= S_{\mathbf{x}',\mathbf{x}'}^{\bll}.
\end{align}
Thus the predictive covariance of our method dominates the predictive covariance of Bayesian Last Layer.
\end{proof}

\subsection{Matrix Concentration Bound}
In our analysis we will rely on the following matrix concentration bound:
\begin{lemma}[Matrix Bernstein inequality]
\label{lemma: Matrix Bernstein}
Let $Z_1,\ldots, Z_n \in \mathbb{R}^{m\times r}$ be independent random matrices with $\mathbb{E}[Z_i]=0$ and $\lVert Z_i \rVert_2 \le R$ almost surely. Then for $n\ge \frac{8}{3}\log\left(\frac{m+r}{\delta} \right)$: With probability $1-\delta$,
\begin{equation}
    \left\Vert \frac{1}{n}\sum_{i=1}^n Z_i \right\rVert_2 \le R\sqrt{\frac{8}{3n} \log\left(\frac{m+r}{\delta} \right)}.
\end{equation}
\end{lemma}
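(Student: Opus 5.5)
The plan is to deduce the Lemma from the standard (rectangular) matrix Bernstein inequality of Tropp, which we take as known. Apply it to the sum $S := \sum_{i=1}^n Z_i$ of independent, centred $m\times r$ matrices with $\lVert Z_i\rVert_2\le R$. It states that for all $t\ge 0$,
\begin{equation*}
    \mathbb{P}\left(\lVert S\rVert_2 \ge t\right) \le (m+r)\exp\left(\frac{-t^2/2}{v + Rt/3}\right),
\end{equation*}
where the matrix variance statistic is
\begin{equation*}
    v := \max\left(\left\lVert \sum_{i=1}^n \mathbb{E}[Z_iZ_i^{\top}]\right\rVert_2, \left\lVert \sum_{i=1}^n \mathbb{E}[Z_i^{\top}Z_i]\right\rVert_2\right).
\end{equation*}
The proof then has three steps: bound $v$, choose $t$, and simplify the exponent.

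First, bound the variance statistic. Since $\lVert Z_iZ_i^{\top}\rVert_2 = \lVert Z_i^{\top}Z_i\rVert_2 = \lVert Z_i\rVert_2^2 \le R^2$, convexity of the norm (Jensen) and the triangle inequality give $v\le nR^2$.

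Second, set
\begin{equation*}
    \varepsilon := R\sqrt{\frac{8}{3n}\log\left(\frac{m+r}{\delta}\right)}, \qquad t := n\varepsilon,
\end{equation*}
so that the event $\lVert \frac1n S\rVert_2 > \varepsilon$ is exactly $\lVert S\rVert_2 > t$. It then suffices to show that the tail bound is at most $\delta$, i.e.\ that the exponent satisfies $\frac{t^2/2}{v+Rt/3}\ge \log\left(\frac{m+r}{\delta}\right)$.

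Third, simplify the exponent using the sample-size assumption. The hypothesis $n\ge \frac{8}{3}\log\left(\frac{m+r}{\delta}\right)$ is equivalent to $\varepsilon\le R$. Hence $Rt/3 = nR\varepsilon/3 \le nR^2/3$, and the denominator is bounded by $v+Rt/3\le \frac{4}{3}nR^2$. The exponent is therefore at least
\begin{equation*}
    \frac{n^2\varepsilon^2/2}{\frac43 nR^2} = \frac{3n\varepsilon^2}{8R^2} = \log\left(\frac{m+r}{\delta}\right),
\end{equation*}
by the choice of $\varepsilon$. So the failure probability is at most $(m+r)\cdot\frac{\delta}{m+r}=\delta$.

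The only real subtlety is this last step. Matrix Bernstein has a mixed sub-Gaussian/sub-exponential tail, and the stated purely sub-Gaussian rate $\sqrt{\log/n}$ only holds once the linear term $Rt/3$ is dominated by the variance term. The sample-size condition is exactly what ensures $\varepsilon\le R$, which gives that domination with the constant $\frac{8}{3}$ working out precisely. Beyond this, the argument needs the rectangular (Hermitian dilation) form of Bernstein, which is where the dimension factor $m+r$ comes from.
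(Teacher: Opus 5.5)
Your proposal is correct and follows the paper's proof essentially step for step. Like the paper, you apply Tropp's rectangular matrix Bernstein inequality (Theorem~1.6) with $t = n\varepsilon$ and bound the variance statistic by $nR^2$. You then use the sample-size condition, which is equivalent to $\varepsilon \le R$, to absorb the linear term, so the exponent becomes $3n\varepsilon^2/(8R^2) = \log\left(\frac{m+r}{\delta}\right)$.
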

\begin{proof}
Define $\sigma^2 := \max \left\{ \left\lVert \sum_{i=1}^n \mathbb{E}[Z_iZ_i^{\top}] \right\rVert_2, \left\lVert \sum_{i=1}^n \mathbb{E}[Z_i^{\top}Z_i] \right\rVert_2\right\}$. By \citet{tropp2012user}, Theorem 1.6: For all $t\ge 0$,
\begin{equation}
    \mathbb{P}\left(\left\Vert \sum_{i=1}^n Z_i \right\rVert_2 \ge t \right) 
    \le (m+r) \exp\left(\frac{-t^2/2}{\sigma^2 + Rt/3} \right).
\end{equation}
Applying this with $t=n\tau$ gives
\begin{equation}
    \mathbb{P}\left(\left\Vert \sum_{i=1}^n Z_i \right\rVert_2 \ge n\tau \right) 
    \le (m+r) \exp\left(\frac{-n^2\tau^2/2}{\sigma^2 + Rn\tau/3} \right).
\end{equation}
Due to $\lVert Z_i Z_i^{\top} \rVert_2 = \lVert Z_i^{\top} Z_i \rVert_2 = \lVert Z_i \rVert_2^2 \le R^2$ we have $\sigma^2 \le nR^2$. Thus for $\tau\le R$,
\begin{equation}
\label{eq: matrix bernstein exp}
    \mathbb{P}\left(\left\Vert \frac{1}{n}\sum_{i=1}^n Z_i \right\rVert_2 \ge \tau \right) 
    \le (m+r) \exp\left(\frac{-3n\tau^2}{8R^2}\right). 
\end{equation}
Now let $\delta > 0$. Then $(m+r) \exp\left(\frac{-3n\tau^2}{8R^2}\right) = \delta$ is equivalent to $\tau = R\sqrt{\frac{8}{3n} \log\left(\frac{m+r}{\delta} \right)}$. For $n\ge \frac{8}{3}\log\left(\frac{m+r}{\delta}\right)$ we have $\tau = R\sqrt{\frac{8}{3n} \log\left(\frac{m+r}{\delta} \right)} \le R$, and can thus apply inequality (\ref{eq: matrix bernstein exp}) to get our result.
\end{proof}
\subsection{Proof of Theorem \ref{thm: approximation error projection}}
\label{app: Proof approximation error projection}
\projectionconentrationthm*
Before starting the proof, note that if $P_X$ has compact support (i.e. the inputs $x$ come from a compact set), there must exist a constant $K$ such that $\lVert \phi^r(x)\rVert_2 \le K$ and $\lVert \phi^m(x) \rVert_2 \le K$ for almost all $x\sim P_X$. This is due to continuity of the feature maps. Further, recall that $(\phi^m(x),\phi^r(x))$ is equal to the parameter-gradient of the neural network backbone at the MAP estimate $\hat{\theta}$. Under NTK-parameterization or $\mu$-parameterization, the parameter-gradient will stay $O(1)$, and in particular it should not scale with $m+r$ \cite{jacot2018neural, yang2020feature}.
\begin{proof}[Proof of Theorem \ref{thm: approximation error projection}]
For notational brevity, define the random variables $\phi^m := \phi^m(x) \in\mathbb{R}^m$, $\phi^r := \phi^r(x) \in \mathbb{R}^r$ for $x\sim P_X$, and similarly $\phi_i^m := \phi^m(\mathbf{x}_i)$, $\phi_i^r := \phi^r(\mathbf{x}_i)$ for $\mathbf{x}_i \sim P_X$. The true map is given by
\begin{equation}
    A
    = \mathbb{E}\left[\phi^m \phi^{r\top} \right] \mathbb{E}\left[\phi^r \phi^{r\top}  \right]^{-1} 
    =: C_{mr} \Sigma_{rr}^{-1}
    \in \mathbb{R}^{m\times r},
\end{equation}
%Define $C_{mr} := \mathbb{E}_{x\sim P_X}\left[\phi^m \phi^{r\top} \right] \in \mathbb{R}^{m\times r}$, $\Sigma_{rr} := \mathbb{E}_{x\sim P_X}\left[\phi^r \phi^{r\top} \right] \in \mathbb{R}^{r\times r}$, and their empirical counterparts $\hat{C}_{mr} := \frac{1}{N}\sum_{i=1}^N \phi^{m}_i \phi^{r\top}_i \in \mathbb{R}_{m\times r}$, $\hat{\Sigma}_{rr} := \frac{1}{N}\sum_{i=1}^N \phi^{r}_i \phi^{r\top}_i \in \mathbb{R}^{r\times r}$. 
while its empirical estimate is
\begin{equation}
    \hat{A}
    = \left( \frac{1}{N}\sum_{i=1}^N \phi^{m}_i \phi^{r\top}_i \right) \left( \frac{1}{N}\sum_{i=1}^N \phi^{r}_i \phi^{r\top}_i \right)^{-1}
    =: \hat{C}_{mr} \hat{\Sigma}_{rr}^{-1}
    \in \mathbb{R}^{m\times r}.
\end{equation}
Then,
\begin{align}
    \lVert \hat{A} - A \rVert_2
    &= \lVert \hat{C}_{mr} \hat{\Sigma}_{rr}^{-1} - C_{mr} \Sigma_{rr}^{-1}\rVert_2 \\
    %&\le \left\lVert \left(\hat{C}_{mr} - C_{mr}\right)\hat{\Sigma}_{rr}^{-1} \right\rVert_2 + \left\lVert C_{mr}\left(\hat{\Sigma}_{rr}^{-1} - \Sigma_{rr}^{-1}\right) \right\rVert_2 \\
    &\le \left\lVert \hat{C}_{mr} - C_{mr} \right\rVert_2 \left\lVert \hat{\Sigma}_{rr}^{-1} \right\rVert_2 +  \left\lVert C_{mr} \right\rVert_2 \left\lVert \hat{\Sigma}_{rr}^{-1} - \Sigma_{rr}^{-1} \right\rVert_2.
\end{align}
First, note that
\begin{equation}
    \lVert C_{mr}\rVert_2 
    = \left\lVert \mathbb{E}[\phi^m \phi^{r\top}] \right\rVert_2
    \le \mathbb{E}\left[ \left\lVert \phi^m \phi^{r\top} \right\rVert_2 \right]
    \le K^2.
\end{equation}

For the first term, consider the independent random matrices $\phi_i^m \phi_i^{r\top} -C_{mr} \in \mathbb{R}^{m\times r}$. Then $\mathbb{E}[\phi_i^m \phi_i^{r\top} -C_{mr}]=0$ and
\begin{equation}
    \lVert \phi_i^m \phi_i^{r\top} -C_{mr} \rVert_2
    \le \lVert \phi_i^m \phi_i^{r\top} \rVert_2 + \lVert C_{mr} \rVert_2
    % \le K^2 + \mathbb{E}[\lVert \phi^m \phi^{r\top} \rVert_2]
    \le 2K^2.
\end{equation}
Applying Lemma \ref{lemma: Matrix Bernstein} gives with probability $1-\delta/2$, for $N \ge \frac{8}{3}\log\left(\frac{m+r}{\delta/2} \right)$:
\begin{equation}
    \left\lVert \hat{C}_{mr} - C_{mr} \right\rVert_2
    \le 2K^2 \sqrt{\frac{8}{3N} \log\left(\frac{m+r}{\delta/2} \right)}.
\end{equation}

For the second term, consider the independent random matrices $\phi_i^r \phi_i^{r\top} - \Sigma_{rr} \in \mathbb{R}^{r\times r}$. Then $\mathbb{E}[\phi_i^r \phi^{r\top} - \Sigma_{rr} ] = 0$ and
\begin{equation}
    \lVert \phi_i^r \phi_i^{r\top} -\Sigma_{rr} \rVert_2
    \le \lVert \phi_i^r \phi_i^{r\top} \rVert_2 + \lVert \mathbb{E}[\phi^r \phi^{r\top}] \rVert_2
    \le K^2 + \mathbb{E}[\lVert \phi^r \phi^{r\top} \rVert_2]
    \le 2K^2.
\end{equation}
Applying Lemma \ref{lemma: Matrix Bernstein} gives with probability $1-\delta/2$, for $N \ge \frac{8}{3}\log\left(\frac{2r}{\delta/2} \right)$:
\begin{equation}
    \left\lVert \hat{\Sigma}_{rr} - \Sigma_{rr} \right\rVert_2
    \le 2K^2 \sqrt{\frac{8}{3N} \log\left(\frac{2r}{\delta/2} \right)}.
\end{equation}
Further, note that
\begin{equation}
    \lVert \Sigma_{rr}^{-1} \rVert_2 
    = \frac{1}{\lambda_{\min}(\Sigma_{rr})}.
\end{equation}
We have $2K^2 \sqrt{\frac{8}{3N} \log\left(\frac{2r}{\delta/2} \right)} \le \frac{1}{2}\lambda_{\min}(\Sigma_{rr})$ if and only if $N \ge \left(\frac{4K^2}{\lambda_{\min}(\Sigma_{rr})} \right)^2 \frac{8}{3}\log\left(\frac{2r}{\delta/2}\right)$. In this case, using Weyl's inequality,
\begin{equation}
    |\lambda_{\min}(\hat{\Sigma}_{rr}) - \lambda_{\min}(\Sigma_{rr})|
    \le \lVert \hat{\Sigma}_{rr} - \Sigma_{rr} \rVert_2 
    \le \frac{1}{2}\lambda_{\min}(\Sigma_{rr}).
\end{equation}
This implies $\lambda_{\min}(\hat{\Sigma}_{rr}) \ge \frac{1}{2}\lambda_{\min}(\Sigma_{rr})$ and thus
\begin{equation}
    \lVert \hat{\Sigma}_{rr}^{-1} \rVert_2
    \le \frac{2}{\lambda_{\min}(\Sigma_{rr})}.
\end{equation}
Finally, using the resolvent identity $A^{-1} - B^{-1} = A^{-1}(B-A)B^{-1}$ gives
\begin{align}
    \lVert \hat{\Sigma}_{rr}^{-1} - \Sigma_{rr}^{-1} \rVert_2
    &\le \lVert  \hat{\Sigma}_{rr}^{-1}\rVert_2 \lVert \Sigma_{rr} - \hat{\Sigma}_{rr}\rVert_2 \lVert \Sigma_{rr}^{-1} \rVert_2 \\
    &\le \frac{2}{\lambda_{\min}(\Sigma_{rr})^2} 2K^2 \sqrt{\frac{8}{3N} \log\left(\frac{2r}{\delta/2} \right)}.
\end{align}
Summing up gives (for $m\ge r$)
\begin{align}
    \lVert \hat{A} - A \rVert_2
    &\le \frac{2}{\lambda_{\min}(\Sigma_{rr})} 2K^2 \sqrt{\frac{8}{3N} \log\left(\frac{m+r}{\delta/2} \right)} + K^2 \frac{2}{\lambda_{\min}(\Sigma_{rr})^2} 2K^2 \sqrt{\frac{8}{3N} \log\left(\frac{2r}{\delta/2} \right)} \\
    &\le \left(1+\frac{K^2}{\lambda_{\min}(\Sigma_{rr})}\right) \frac{4K^2}{\lambda_{\min}(\Sigma_{rr})} \sqrt{\frac{8}{3N} \log\left(\frac{m+r}{\delta/2} \right)}.
\end{align}
The result follows with $K' := \sqrt{8/3}\left(1+\frac{K^2}{\lambda_{\min}(\Sigma_{rr})}\right) \frac{4K^2}{\lambda_{\min}(\Sigma_{rr})}$.
    
\end{proof}
\subsection{Proof of Theorem \ref{thm: approximation error subsampling}}
\label{app: Proof original bound subsampling}
\begin{lemma}
\label{lemma: Bernstein for population matrix}
Consider the setting of Theorem \ref{thm: approximation error subsampling}. Define $\eta_k(\delta) := 2K^2\sqrt{\frac{8}{3k} \log(4r/\delta)}$. Then, with probability $1-\delta$:
\begin{equation}
    \left\lVert \frac{1}{k} \sum_{i=1}^k \left( \phi_{s_i} \phi_{s_i}^{\top} - \Sigma \right) \right\rVert_2
    \le \eta_k(\delta),
    \;\;
    \left\lVert \frac{1}{N} \sum_{i=1}^N \left( \phi_i \phi_i^{\top} - \Sigma \right) \right\rVert_2
    \le \eta_N(\delta) \le \eta_k(\delta).
\end{equation}
\end{lemma}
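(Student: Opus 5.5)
We would prove both inequalities by applying the matrix Bernstein bound of Lemma~\ref{lemma: Matrix Bernstein} to centred rank-one matrices, with dimension parameter $m+r=2r$ (the summands are $r\times r$) and failure probability $\delta/2$ for each of the two events. A union bound then gives joint probability at least $1-\delta$. The threshold becomes $\log(2r/(\delta/2))=\log(4r/\delta)$, which is exactly the logarithm appearing in $\eta_k(\delta)$.

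\emph{Full-sample term.} Write $\phi_i:=\phi^L(\mathbf{x}_i)$ and set $Z_i:=\phi_i\phi_i^{\top}-\Sigma$. Since the $\mathbf{x}_i$ are iid from $P_X$, the $Z_i$ are independent with $\mathbb{E}[Z_i]=0$. The boundedness assumption gives $\lVert \phi_i\phi_i^{\top}\rVert_2=\lVert\phi_i\rVert_2^2\le K^2$ and $\lVert\Sigma\rVert_2\le \mathbb{E}\lVert\phi\phi^{\top}\rVert_2\le K^2$, so $\lVert Z_i\rVert_2\le 2K^2=:R$. The sample-size condition of Lemma~\ref{lemma: Matrix Bernstein} is $N\ge \frac{8}{3}\log(4r/\delta)$, and this is implied by the hypothesis $N\ge k\ge \frac83\log(4r/\delta)(8K^2/\lambda_{\min}(\Sigma))^2$. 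To see this, note $\lambda_{\min}(\Sigma)\le \operatorname{tr}(\Sigma)/r\le K^2$, so the squared factor is at least $64\ge1$. The lemma then yields $\lVert\frac1N\sum_i Z_i\rVert_2\le \eta_N(\delta)$. Finally, $\eta_N(\delta)\le\eta_k(\delta)$ holds trivially because $N\ge k$.

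\emph{Subsampled term.} This is the delicate step, since the indices $s_1,\ldots,s_k$ are drawn without replacement from the already-sampled $\mathbf{x}_1,\ldots,\mathbf{x}_N$. Our plan is to observe that, when the index draw is independent of the data, $(\mathbf{x}_{s_1},\ldots,\mathbf{x}_{s_k})$ is itself an iid sample of size $k$ from $P_X$. This holds because the $\mathbf{x}_i$ are exchangeable and iid, and distinct indices select distinct independent draws. Consequently $W_j:=\phi_{s_j}\phi_{s_j}^{\top}-\Sigma$ are independent, mean zero, and bounded by $2K^2$. Lemma~\ref{lemma: Matrix Bernstein} with $n=k$ and the same condition on $k$ then gives the first bound with probability $1-\delta/2$.

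The main subtlety is exactly this distributional identification, together with the fact that the two events are dependent; the union bound handles the dependence, so no independence between the events is needed. If a bound conditional on the data were wanted instead, we would replace this argument with Tropp's without-replacement extension of the matrix Bernstein inequality around the empirical mean. That route would, however, centre the sum at $\hat\Sigma_N$ rather than $\Sigma$, and recovering the stated bound would then require combining it with the full-sample estimate via the triangle inequality. We would therefore use the iid-reduction argument.
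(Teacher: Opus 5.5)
Your proposal is correct and follows essentially the same route as the paper: apply the matrix Bernstein bound (Lemma~\ref{lemma: Matrix Bernstein}) to $Z_i=\phi_i\phi_i^{\top}-\Sigma$ with $R=2K^2$, dimension parameter $2r$, and failure probability $\delta/2$ for each of the subsampled and full sums, then combine the two events by a union bound. Your two extra checks fill in steps the paper leaves implicit: that the without-replacement subsample, drawn independently of the data, is itself iid from $P_X$, and that $\lambda_{\min}(\Sigma)\le K^2$ makes the theorem's hypothesis on $k$ imply the Bernstein sample-size condition.
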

\begin{proof}[Proof of Lemma \ref{lemma: Bernstein for population matrix}]
For notational brevity, define the random variables $\phi := \phi^L(x) \in \mathbb{R}^r$ for $x\sim P_X$, and similarly $\phi_i := \phi^L(\mathbf{x}_i)$ for $\mathbf{x}_i\sim P_X$.
The empirical population matrix is $\sum_{i=1}^N \phi_i\phi_i^{\top} 
=: N\hat{\Sigma}_N$, and its subsampled estimator by $\frac{N}{k} \sum_{i=1}^k \phi_{s_i}\phi_{s_i}^{\top}
=:  N\hat{\Sigma}_k$. Recall that $\Sigma :=\mathbb{E}[\phi\phi^\top] \in \mathbb{R}^{r\times r}$. We can bound
\begin{equation}
    \lVert \phi_i\phi_i^\top - \Sigma \rVert_2 
    \le \lVert  \phi_i\phi_i^\top \rVert_2 + \lVert \mathbb{E}[\phi\phi^\top] \rVert_2
    \le K^2 + \mathbb{E}\left[ \lVert  \phi_i\phi_i^\top \rVert_2  \right]
    \le 2K^2.
\end{equation}
Define $\eta_k(\delta) := 2K^2\sqrt{\frac{8}{3k} \log(4r/\delta)}$.
Apply Lemma \ref{lemma: Matrix Bernstein} with $Z_i = \phi_i \phi_i^{\top} - \Sigma$ and $\delta/2$ for $Z_{s_1}, \ldots, Z_{s_k}$ and $Z_{1}, \ldots, Z_N$. Then, we get for $N\ge k\ge \frac{8}{3}\log\left(4r/\delta \right)$, with probability $1-\delta$:
\begin{equation}
    \left\lVert \frac{1}{k} \sum_{i=1}^k \left( \phi_{s_i} \phi_{s_i}^{\top} - \Sigma \right) \right\rVert_2
    \le \eta_k(\delta),
    \;\;
    \left\lVert \frac{1}{N} \sum_{i=1}^N \left( \phi_i \phi_i^{\top} - \Sigma \right) \right\rVert_2
    \le \eta_N(\delta) \le \eta_k(\delta).
\end{equation}
\end{proof}

\begin{lemma}
\label{lemma: Inverse difference bounded}
For all $t\ge 0$, $\left|\frac{1}{1+t} - \frac{1}{1+(1+\epsilon)t}\right| \le \frac{\epsilon}{4}$. For $\epsilon\le \frac{1}{2}$, \footnote{It suffices that $\epsilon \le 1-(\sqrt{2}-1)^2$. This is given for $\epsilon\le \frac{1}{2}$, as $(\sqrt{2}-1)^2 \le \frac{1}{2}$ is equivalent to $\sqrt{2}(\sqrt{2}-1) \le 1$, which is equivalent to $1\le \sqrt{2}$.} we also have $\left|\frac{1}{1+t} - \frac{1}{1+ (1-\epsilon)t}\right| \le \frac{\epsilon}{2}$.
\end{lemma}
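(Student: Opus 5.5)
The plan is to compute both differences in closed form and bound the resulting rational function of $t$ by AM--GM; no earlier result of the paper is needed. Throughout I take $\epsilon \ge 0$, which is the intended use (for the second statement we also need $\epsilon<1$, which holds since $\epsilon\le \frac12$, so that $1-\epsilon>0$). For $c>0$ and $t\ge 0$, putting the two fractions over a common denominator gives
\begin{equation}
    \frac{1}{1+t} - \frac{1}{1+ct} = \frac{(c-1)t}{(1+t)(1+ct)} .
\end{equation}
So the first claim ($c=1+\epsilon$) reduces to bounding $\frac{\epsilon t}{(1+t)(1+(1+\epsilon)t)}$. The second claim ($c=1-\epsilon$) reduces to bounding $\frac{\epsilon t}{(1+t)(1+(1-\epsilon)t)}$. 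In both cases the case $t=0$ is trivial, so I assume $t>0$.

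For the first inequality I will use $1+(1+\epsilon)t \ge 1+t$. This bounds the expression by $\epsilon\, \frac{t}{(1+t)^2}$. Then AM--GM gives $(1+t)^2 \ge 4t$, hence the bound $\epsilon/4$.

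For the second inequality I will bound $g_c(t) := \frac{t}{(1+t)(1+ct)}$ with $c = 1-\epsilon \in [\frac12,1]$. The key step is to write the reciprocal as
\begin{equation}
    \frac{(1+t)(1+ct)}{t} = \frac{1}{t} + ct + (1+c) .
\end{equation}
By AM--GM, $\frac1t + ct \ge 2\sqrt{c}$, with equality at $t = 1/\sqrt c$. Hence
\begin{equation}
    g_c(t) \le \frac{1}{(1+\sqrt c)^2} .
\end{equation}
It therefore suffices to show $(1+\sqrt{c})^2 \ge 2$. This is equivalent to $\sqrt{c} \ge \sqrt2 - 1$, i.e.\ $1-\epsilon \ge (\sqrt2-1)^2$, which is exactly the footnote's condition $\epsilon \le 1-(\sqrt2-1)^2$.

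It remains to verify $(\sqrt2-1)^2 \le \frac12$. This holds as in the footnote: $\sqrt2(\sqrt2-1)\le 1$ is equivalent to $1\le\sqrt2$. Hence the condition is implied by $\epsilon\le\frac12$, and we get the bound $\epsilon/2$.

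I do not expect a real obstacle. The only mildly delicate step is the second bound, since there the denominator factor $1+(1-\epsilon)t$ is smaller than $1+t$, so the easy comparison used for the first bound fails. The reciprocal-plus-AM--GM rewriting handles this cleanly and explains where the constant $(\sqrt2-1)^2$ in the footnote comes from.
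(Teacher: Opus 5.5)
Your proof is correct and is essentially the paper's argument. Both write the difference as $|c-1|\,t/\bigl((1+t)(1+ct)\bigr)$, bound it by its supremum $|c-1|/(1+\sqrt{c})^2$ (attained at $t=1/\sqrt{c}$), and then compare constants. The differences are cosmetic: you obtain the supremum by AM--GM on the reciprocal instead of the paper's derivative computation, which spares you from arguing that the critical point is a global maximum; and for $c=1+\epsilon$ you use the cruder comparison $1+(1+\epsilon)t\ge 1+t$ instead of the exact bound $\epsilon/(1+\sqrt{1+\epsilon})^2$, which still gives $\epsilon/4$.
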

\begin{proof}
For any $c>0$, consider the function
\begin{equation}
    t
    \mapsto g_c(t) := \left|\frac{1}{1+t} - \frac{1}{1+ct} \right|
    =|c-1|\frac{t}{(1+t)(1+ct)}.
\end{equation}
This has derivative
\begin{equation}
    \frac{d}{dt} g_c(t)
    = |c-1|\frac{1+(c+1)t+ct^2 - t((c+1)+2ct)}{\left((1+t)(1+ct) \right)^2}
    = |c-1|\frac{1-ct^2}{\left((1+t)(1+ct) \right)^2}.
\end{equation}
Thus, it is maximized at $t^* = \frac{1}{\sqrt{c}}$, with maximum value $\frac{|c-1|}{(1+\sqrt{c})^2}$.

Applying this with $c=1+\epsilon$,
\begin{equation}
    \left|\frac{1}{1+t} - \frac{1}{1+(1+\epsilon)t}\right| 
    \le \epsilon \frac{1}{(1+\sqrt{1+\epsilon})^2}
    \le \frac{\epsilon}{4}.
\end{equation}
Further, if $\epsilon$ satisfies $1+\sqrt{1-\epsilon} \ge \sqrt{2}$ (e.g. for $\epsilon \le \frac{3}{4}$), we get with $c=1-\epsilon$,
\begin{equation}
    \left|\frac{1}{1+t} - \frac{1}{1+ (1-\epsilon)t}\right|
    \le \epsilon \frac{1}{(1+\sqrt{1-\epsilon})^2}
    \le \frac{\epsilon}{2}.
\end{equation}

\end{proof}

\subsamplingconcentrationthm*
\begin{proof}[Proof of Theorem \ref{thm: approximation error subsampling}]
Lemma \ref{lemma: Bernstein for population matrix} states that for $N\ge k\ge \frac{8}{3}\log\left(\frac{4r}{\delta} \right)$, with probability $1-\delta$:
\begin{equation}
    \Sigma - \eta_k I_r \preceq \hat{\Sigma}_k, \hat{\Sigma}_N \preceq \Sigma + \eta_k I_r,
\end{equation}
where $\eta_k(\delta) = 2K^2 \sqrt{\frac{8}{3k}\log(4r/\delta)}$. Let $\epsilon_k := \frac{\eta_k(\delta)}{\lambda_{\min}(\Sigma)}$. Due to $\frac{1}{\lambda_{\min}(\Sigma)} \Sigma \succeq I_r$, we get
\begin{equation}
    (1 - \epsilon_k) \Sigma \preceq \hat{\Sigma}_k, \hat{\Sigma}_N \preceq (1+\epsilon_k)\Sigma.
\end{equation}

%This gives the desired proof with $R' = \frac{2K^2}{\mu}$ (instead of $\frac{K^2}{\mu}+1$, so let's proceed with this, instead of first doing $\psi_j = \Sigma^{-1/2}\phi_j$.
We can now further follow
\begin{equation}
    (1-2\epsilon_k)\hat{\Sigma}_N 
    \preceq \hat{\Sigma}_k 
    \preceq (1+4\epsilon_k) \hat{\Sigma}_N.
\end{equation}
Multiply both sides with $N$. It follows that
\begin{equation}
    I_r + \sigma^{-2}(1-2\epsilon_k)N\hat{\Sigma}_N 
    \preceq I_r + \sigma^{-2} N\hat{\Sigma}_k 
    \preceq I_r + \sigma^{-2} (1+4\epsilon_k) N\hat{\Sigma}_N,
\end{equation}
and further
\begin{equation}
    \left(I_r + \sigma^{-2} (1+4\epsilon_k) N\hat{\Sigma}_N \right)^{-1}
    \preceq \left(I_r + \sigma^{-2} N\hat{\Sigma}_k  \right)^{-1}
    \preceq \left( I_r + \sigma^{-2}(1-2\epsilon_k)N\hat{\Sigma}_N  \right)^{-1}.
\end{equation}
Substracting $\left(I_r + \alpha N \hat{\Sigma}_N\right)^{-1}$ on all sides gives
\begin{align}
    &\left\lVert \left(I_r + \sigma^{-2} N\hat{\Sigma}_k  \right)^{-1} - \left(I_r + \sigma^{-2} N \hat{\Sigma}_N\right)^{-1}  \right\rVert_2 \\
    \le& \max\left\{ \left\lVert \left(I_r + \sigma^{-2} (1+4\epsilon_k) N\hat{\Sigma}_N \right)^{-1} -  \left(I_r + \sigma^{-2} N \hat{\Sigma}_N\right)^{-1}  \right\rVert_2, \left\lVert \left( I_r + \sigma^{-2}(1-2\epsilon_k)N\hat{\Sigma}_N  \right)^{-1} - \left(I_r + \sigma^{-2} N \hat{\Sigma}_N\right)^{-1} \right\rVert_2 \right\} \\
    =& \max\left\{ \max_i \left| \frac{1}{1+\sigma^{-2} \lambda_i} - \frac{1}{1+\sigma^{-2} (1+4\epsilon_k)\lambda_i} \right|, \max_i \left| \frac{1}{1+\sigma^{-2} \lambda_i} - \frac{1}{1+\sigma^{-2} (1-2\epsilon_k)\lambda_i}  \right| \right\}.
\end{align}
We denoted the eigenvalues of $N\hat{\Sigma}_N$ by $\lambda_i \ge 0$. Now let $k$ large enough such that $2\epsilon_k = \frac{2\eta_k(\delta)}{\lambda_{\min}(\Sigma)} \le \frac{1}{2}$. This is given if $k\ge \frac{8}{3} \log\left(\frac{4r}{\delta}\right) \left(\frac{8K^2}{\lambda_{\min}(\Sigma)} \right)^2$. Then, we can apply Lemma \ref{lemma: Inverse difference bounded} with $t=\sigma^{-2}\lambda_i$ to get
\begin{equation}
    \left\lVert \left(I_r + \sigma^{-2} N\hat{\Sigma}_k  \right)^{-1} - \left(I_r + \sigma^{-2} N \hat{\Sigma}_N\right)^{-1}  \right\rVert_2 
    \le \epsilon_k
    = \frac{\eta_k(\delta)}{\lambda_{\min}(\Sigma)}.
\end{equation}
Recall that $\eta_k(\delta) = 2K^2\sqrt{\frac{8}{3k} \log(4r/\delta)}$.

Finally, consider $N'$ test points $\mathbf{x}'_1,\ldots, \mathbf{x}'_{N'}$. $\lVert \phi^L(\mathbf{x}'_i) \rVert_2\le K$ implies $\lVert \Phi_{\mathbf{x}'}^L \rVert_2 \le \sqrt{N'}K$. Thus,
\begin{align}
    \lVert S_{\mathbf{x}',\mathbf{x}'}^{B,k} - S_{\mathbf{x}', \mathbf{x}'}^B \rVert_2
    &= \sigma^2 \left\lVert \Phi_{\mathbf{x}'}^L \left( \left( \sigma^2 I_r + N \hat{\Sigma}_N \right)^{-1} - \left(\sigma^2 I_r + N \hat{\Sigma}_k  \right)^{-1} \right) \Phi_{\mathbf{x}'}^{L\top} \right\rVert_2 \\
    &\le \sigma^2 N' K^2 \frac{1}{\sigma^2} \eta_k(\delta) \frac{1}{\lambda_{\min}(\Sigma)} \\
    &= N' \frac{2K^4}{\lambda_{\min}(\Sigma)} \sqrt{\frac{8}{3k} \log(4r/\delta)}.
\end{align}
\end{proof}

\subsection{Proof of Alternative Bound in Theorem \ref{thm: approximation error subsampling}}
\label{app: Proof alternative bound subsampling}
In the following, we prove an alternative bound that may be sharper in certain cases.
%, however it requires $k\ge \frac{8}{3} \log\left(\frac{4r}{\delta}\right) \left(\frac{4K^2}{\lambda_{\min}(\Sigma)} \right)^2$instead of just $k\ge \frac{8}{3} \log\left(\frac{4r}{\delta}\right)$.
\begin{theorem}
\label{thm: bound for larger k approximation error subsampling}
Consider the assumptions of Theorem \ref{thm: approximation error subsampling}.
Let $N\ge k\ge \frac{8}{3} \log\left(\frac{4r}{\delta}\right) \left(\frac{4K^2}{\lambda_{\min}(\Sigma)} \right)^2$. Then with probability of at least $1-\delta$ over iid samples $\mathbf{x}_1,\ldots,\mathbf{x}_N \sim P_X$, for any $N'$ test points $\mathbf{x}'_1, \ldots, \mathbf{x}'_{N'}$,
\begin{equation}
    \lVert S_{\mathbf{x}',\mathbf{x}'}^{B,k} - S_{\mathbf{x}',\mathbf{x}'}^B \rVert_2
    \le N' \frac{4K^4}{\left(\sigma/N + \sigma^{-1}\lambda_{\min}(\Sigma)/2 \right)^2}  \frac{1}{N} \sqrt{\frac{8}{3k}\log(4r/\delta)}.
\end{equation}
\end{theorem}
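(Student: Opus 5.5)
The plan is to compare the two $r\times r$ inverses directly with the resolvent identity, instead of the eigenvalue-wise sandwich argument used for Theorem \ref{thm: approximation error subsampling}. Write $M_N := \sigma^2 I_r + N\hat{\Sigma}_N$ and $M_k := \sigma^2 I_r + N\hat{\Sigma}_k$. Exactly as in the last display of the previous proof,
\begin{equation}
S^{B,k}_{\mathbf{x}',\mathbf{x}'} - S^{B}_{\mathbf{x}',\mathbf{x}'} = \sigma^2\, \Phi^L_{\mathbf{x}'}\left(M_k^{-1} - M_N^{-1}\right)\Phi^{L\top}_{\mathbf{x}'} .
\end{equation}
Since $\lVert \Phi^L_{\mathbf{x}'}\rVert_2 \le \sqrt{N'}K$, it suffices to bound $\lVert M_k^{-1}-M_N^{-1}\rVert_2$.

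First, I would work on the event of Lemma \ref{lemma: Bernstein for population matrix}, which has probability at least $1-\delta$. On it, $\lVert \hat{\Sigma}_k - \Sigma\rVert_2 \le \eta_k(\delta)$ and $\lVert \hat{\Sigma}_N - \Sigma\rVert_2 \le \eta_k(\delta)$. The triangle inequality then gives $\lVert \hat{\Sigma}_N - \hat{\Sigma}_k\rVert_2 \le 2\eta_k(\delta)$. The assumed lower bound $k \ge \frac{8}{3}\log(4r/\delta)\left(4K^2/\lambda_{\min}(\Sigma)\right)^2$ is exactly what makes $\eta_k(\delta) \le \lambda_{\min}(\Sigma)/2$. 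By Weyl's inequality, $\lambda_{\min}(\hat{\Sigma}_k)$ and $\lambda_{\min}(\hat{\Sigma}_N)$ are then both at least $\lambda_{\min}(\Sigma)/2$. Consequently
\begin{equation}
\lambda_{\min}(M_k),\ \lambda_{\min}(M_N) \ \ge\ \sigma^2 + N\lambda_{\min}(\Sigma)/2 .
\end{equation}

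Next, I would apply the resolvent identity $M_k^{-1}-M_N^{-1} = M_k^{-1}(M_N - M_k)M_N^{-1} = N\, M_k^{-1}(\hat{\Sigma}_N-\hat{\Sigma}_k)M_N^{-1}$. This yields
\begin{equation}
\lVert M_k^{-1}-M_N^{-1}\rVert_2 \le \frac{2N\eta_k(\delta)}{\left(\sigma^2 + N\lambda_{\min}(\Sigma)/2\right)^2}.
\end{equation}
Combining with the prefactor $\sigma^2 N' K^2$ and substituting $\eta_k(\delta) = 2K^2\sqrt{\frac{8}{3k}\log(4r/\delta)}$ gives
\begin{equation}
\lVert S^{B,k}_{\mathbf{x}',\mathbf{x}'} - S^{B}_{\mathbf{x}',\mathbf{x}'}\rVert_2 \le \frac{4\sigma^2 N' K^4 N}{\left(\sigma^2 + N\lambda_{\min}(\Sigma)/2\right)^2}\sqrt{\frac{8}{3k}\log(4r/\delta)}.
\end{equation}

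Finally, to reach the stated form I would factor $(\sigma^2 + N\lambda_{\min}(\Sigma)/2)^2 = N^2\sigma^2\left(\sigma/N + \sigma^{-1}\lambda_{\min}(\Sigma)/2\right)^2$. The $\sigma^2$ then cancels and one power of $N$ remains in the denominator, which reproduces the claimed bound exactly. Each step is routine. The only points needing care are that the sample-size condition is precisely the one ensuring $\eta_k(\delta) \le \lambda_{\min}(\Sigma)/2$, and that the same probability-$(1-\delta)$ event from Lemma \ref{lemma: Bernstein for population matrix} controls both $\hat{\Sigma}_k$ and $\hat{\Sigma}_N$ at once.
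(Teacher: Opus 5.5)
Your proposal is correct and follows the paper's proof essentially step for step: the same probability-$(1-\delta)$ event from Lemma~\ref{lemma: Bernstein for population matrix} controls both $\hat{\Sigma}_k$ and $\hat{\Sigma}_N$, the triangle inequality gives $2\eta_k(\delta)$, and then the resolvent identity and Weyl's inequality are applied under the condition $\eta_k(\delta)\le\lambda_{\min}(\Sigma)/2$. The differences are cosmetic: you normalize with $\sigma^2 I_r + N\hat{\Sigma}$ rather than $I_r + \sigma^{-2}N\hat{\Sigma}$, and you write out the final prefactor and factoring step, which the paper defers to the proof of Theorem~\ref{thm: approximation error subsampling}.
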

\begin{proof}[Proof of Theorem \ref{thm: bound for larger k approximation error subsampling}]
Applying Lemma \ref{lemma: Bernstein for population matrix} gives for $N\ge k\ge \frac{8}{3}\log\left(\frac{4r}{\delta} \right)$, with probability $1-\delta$:
\begin{equation}
    \lVert \hat{\Sigma}_k - \hat{\Sigma}_N \rVert_2
    \le \lVert \hat{\Sigma}_k - \Sigma \rVert_2 + \lVert \Sigma - \hat{\Sigma}_N\rVert_2
    \le 2\eta_k(\delta).
\end{equation}
Using the resolvent identity $A^{-1} - B^{-1} = A^{-1} (B-A)B^{-1}$ we get for $k\ge \frac{8}{3}\log\left(\frac{4r}{\delta} \right)$, with probability $1-\delta$:
\begin{align}
    \left\lVert \left( I_r + \sigma^{-2}N \hat{\Sigma}_N \right)^{-1} - \left( I_r +\sigma^{-2} N \hat{\Sigma}_k  \right)^{-1} \right\rVert_2
    &\le \left\lVert \left(  I_r + \sigma^{-2} N \hat{\Sigma}_N \right)^{-1} \right\rVert_2 
    \sigma^{-2}
    \left\lVert N\hat{\Sigma}_k - N \hat{\Sigma}_N \right\rVert_2 
    \left\lVert \left( I_r + \sigma^{-2} N \hat{\Sigma}_k \right)^{-1} \right\rVert_2 \\
    &\le \frac{1}{1 + \sigma^{-2}N \lambda_{\min}(\hat{\Sigma}_N)} \sigma^{-2} N 2\eta_k(\delta) \frac{1}{1 +\sigma^{-2} N \lambda_{\min}(\hat{\Sigma}_k)} \\
    &=  2\eta_k(\delta) \frac{1}{N}\frac{1}{\sigma/N + \sigma^{-1}\lambda_{\min}(\hat{\Sigma}_N)} \frac{1}{\sigma/N +\sigma^{-1}\lambda_{\min}(\hat{\Sigma}_k)} \\
    &\le 2\eta_k(\delta)  \frac{1}{N}\frac{1}{\left(\sigma/N + \sigma^{-1}\lambda_{\min}(\Sigma)/2 \right)^2}
    .
\end{align}
Recall that $\eta_k(\delta) = 2K^2\sqrt{\frac{8}{3k} \log(4r/\delta)}$.
In the last step, we used $\lambda_{\min}(\hat{\Sigma}_N) \ge \lambda_{\min}(\Sigma) - \eta_k(\delta)$ and $\lambda_{\min}(\hat{\Sigma}_k) \ge \lambda_{\min}(\Sigma) - \eta_k(\delta)$, and further chose $k$ large enough such that $\eta_k(\delta) \le \frac{1}{2}\lambda_{\min}(\Sigma)$, i.e. $k\ge \frac{8}{3} \log\left(\frac{4r}{\delta}\right) \left(\frac{4K^2}{\lambda_{\min}(\Sigma)} \right)^2$.

The bound for $\lVert S_{\mathbf{x}',\mathbf{x}'}^{B,k} - S_{\mathbf{x}',\mathbf{x}'}^B \rVert_2$ follows in the same way as in the original proof.
\end{proof}

\section{On the Exactness of the Linear Projection under Low-Rank eNTK}\label{app: motivation-low-rank}

\begin{lemma}\label{lemma: exactness}
    Let $x_1, \dots, x_N \in \mathbb{R}^d$ be input points, and let $\Phi_\mathbf{x}^p = [\Phi_\mathbf{x}^m, \Phi_\mathbf{x}^r] \in \mathbb{R}^{N \times (m+r)}$ denote the empirical NTK feature matrix. Assume that $N \geq r$, that the last-layer feature matrix has full column rank, $\mathrm{rank}(\Phi_\mathbf{x}^r) = r$, and that the eNTK matrix $k^p_{\mathbf{x,x}} = \Phi_\mathbf{x}^p\Phi_\mathbf{x}^{p\top}$ has rank at most $r$. Then, there exists a matrix $A_* \in \mathbb{R}^{m \times r}$ such that $\Phi_\mathbf{x}^m = \Phi_\mathbf{x}^rA_*^\top$. Equivalently, if $B_* := \begin{pmatrix} A_* \\ I_r \end{pmatrix} \in \mathbb{R}^{(m+r)\times r}$, then
    \begin{equation*}
        \Phi_\mathbf{x}^p = \Phi_\mathbf{x}^rB^\top_*.
    \end{equation*}
    Moreover, the least-squares solution is unique
    \begin{equation*}
        A_* = \Phi_\mathbf{x}^{m\top}\Phi_\mathbf{x}^r(\Phi_\mathbf{x}^{r\top}\Phi_\mathbf{x}^r)^{-1}.
    \end{equation*}
\end{lemma}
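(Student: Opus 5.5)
The plan is to argue entirely with column spaces (ranges) of the feature matrices. Since $k^p_{\mathbf{x},\mathbf{x}} = \Phi^p_{\mathbf{x}}\Phi^{p\top}_{\mathbf{x}}$, we have $\mathrm{rank}(\Phi^p_{\mathbf{x}}) = \mathrm{rank}(k^p_{\mathbf{x},\mathbf{x}}) \le r$. This follows from the standard identity $\mathrm{range}(MM^\top) = \mathrm{range}(M)$: if $MM^\top v = 0$ then $\lVert M^\top v\rVert_2^2 = 0$, so the null spaces of $MM^\top$ and $M^\top$ agree, and taking orthogonal complements gives equal ranges.

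Next we compare ranges. The columns of $\Phi^p_{\mathbf{x}} = [\Phi^m_{\mathbf{x}}, \Phi^r_{\mathbf{x}}]$ include those of $\Phi^r_{\mathbf{x}}$, so
\begin{equation*}
\mathrm{range}(\Phi^r_{\mathbf{x}}) \subseteq \mathrm{range}(\Phi^p_{\mathbf{x}}).
\end{equation*}
By assumption the left side has dimension $r$, and by the first step the right side has dimension at most $r$. Hence the two subspaces of $\mathbb{R}^N$ coincide. In particular every column of $\Phi^m_{\mathbf{x}}$ lies in $\mathrm{range}(\Phi^r_{\mathbf{x}})$. So for each column index $j \in \{1,\ldots,m\}$ there is a coefficient vector $a_j\in\mathbb{R}^r$ with $(\Phi^m_{\mathbf{x}})_{:,j} = \Phi^r_{\mathbf{x}} a_j$. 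Stacking the $a_j^\top$ as the rows of $A_*$ gives $\Phi^m_{\mathbf{x}} = \Phi^r_{\mathbf{x}} A_*^\top$. The block identity $\Phi^p_{\mathbf{x}} = [\Phi^r_{\mathbf{x}}A_*^\top, \Phi^r_{\mathbf{x}}] = \Phi^r_{\mathbf{x}} B_*^\top$ then follows immediately.

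For the closed form and uniqueness, note that full column rank of $\Phi^r_{\mathbf{x}}$ makes $\Phi^{r\top}_{\mathbf{x}}\Phi^r_{\mathbf{x}}$ invertible. Multiply $\Phi^m_{\mathbf{x}} = \Phi^r_{\mathbf{x}} A_*^\top$ on the left by $\Phi^{r\top}_{\mathbf{x}}$ and solve to get
\begin{equation*}
A_*^\top = (\Phi^{r\top}_{\mathbf{x}}\Phi^r_{\mathbf{x}})^{-1}\Phi^{r\top}_{\mathbf{x}}\Phi^m_{\mathbf{x}}.
\end{equation*}
Transposing yields the stated formula. The same computation shows that any $A$ with $\Phi^m_{\mathbf{x}} = \Phi^r_{\mathbf{x}}A^\top$ equals this matrix, and also that $\Phi^r_{\mathbf{x}}$ is injective. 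For the least-squares problem of Section~\ref{sec: ApproxeNTK}, the objective $\lVert \Phi^m_{\mathbf{x}} - \Phi^r_{\mathbf{x}}A^\top\rVert_F^2$ is strictly convex in $A$ because $\Phi^{r\top}_{\mathbf{x}}\Phi^r_{\mathbf{x}} \succ 0$. Its unique minimizer is given by the normal equations and attains zero residual, so it coincides with $A_*$.

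Every step is elementary linear algebra. The only point needing care is the rank identity $\mathrm{rank}(\Phi^p_{\mathbf{x}}) = \mathrm{rank}(k^p_{\mathbf{x},\mathbf{x}})$, which converts the kernel assumption into the inclusion-with-equal-dimension argument; I would state it explicitly via the null-space argument above.
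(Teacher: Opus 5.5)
Your proposal is correct and follows essentially the same argument as the paper. Both use $\mathrm{rank}(\Phi^p_{\mathbf{x}}) = \mathrm{rank}(k^p_{\mathbf{x},\mathbf{x}}) \le r$, then the inclusion $\mathrm{col}(\Phi^r_{\mathbf{x}}) \subseteq \mathrm{col}(\Phi^p_{\mathbf{x}})$ with equal dimensions, then read off $A_*$ column by column and obtain the closed form from the invertibility of $\Phi^{r\top}_{\mathbf{x}}\Phi^r_{\mathbf{x}}$. You additionally justify the rank identity via the null-space argument and the uniqueness of the least-squares minimizer via strict convexity, both of which the paper takes for granted.
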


\begin{proof}
    Since $k_{\mathbf{x,x}}^p = \Phi_\mathbf{x}^p\Phi_\mathbf{x}^{p\top}$, we have $\mathrm{rank}(k^p_{\mathbf{x, x}}) = \mathrm{rank}(\Phi_\mathbf{x}^p)$. By assumption, $\mathrm{rank}(\Phi_\mathbf{x}^p) \leq r$. On the other hand, $\Phi_\mathbf{x}^r$ is a block of $\Phi_\mathbf{x}^p$, so every column of $\Phi_\mathbf{x}^r$ is also a column of $\Phi_\mathbf{x}^p$. Hence $\mathrm{col}(\Phi_\mathbf{x}^r) \subseteq \mathrm{col}(\Phi_\mathbf{x}^p)$, which implies $\mathrm{rank}(\Phi_\mathbf{x}^p) \geq \mathrm{rank}(\Phi_\mathbf{x}^r)$. Therefore $\mathrm{rank}(\Phi_\mathbf{x}^p) = r$.

    We now have $\mathrm{col}(\Phi_\mathbf{x}^r) \subseteq \mathrm{col}(\Phi_\mathbf{x}^p)$ and both spaces have the same dimension $r$. It follows that $\mathrm{col}(\Phi_\mathbf{x}^r) = \mathrm{col}(\Phi_\mathbf{x}^p)$. In particular, each column of $\Phi_\mathbf{x}^m$ belongs to $\mathrm{col}(\Phi_\mathbf{x}^r)$. Therefore there exists a matrix $A_*^\top \in \mathbb{R}^{r \times m}$ such that $\Phi_\mathbf{x}^m = \Phi_\mathbf{x}^rA_*^\top$. This proves the first claim.

    Defining $B_* := \begin{pmatrix} A_* \\ I_r \end{pmatrix}$, we obtain
    \begin{equation*}
        \Phi_\mathbf{x}^rB^\top_* = [\Phi_\mathbf{x}^rA_*^\top, \Phi_\mathbf{x}^r] = [\Phi_\mathbf{x}^m, \Phi_\mathbf{x}^r] = \Phi_\mathbf{x}^p,
    \end{equation*}
    which proves the equivalent factorization.

    Finally, since $\mathrm{rank}(\Phi_\mathbf{x}^r) = r$, the matrix $\Phi_\mathbf{x}^{r\top}\Phi_\mathbf{x}^r$ is invertible. The equation $\Phi_\mathbf{x}^m = \Phi_\mathbf{x}^rA_*^\top$ therefore has the unique least-squares solution
    \begin{equation*}
        A^\top_* = (\Phi_\mathbf{x}^{r\top}\Phi_\mathbf{x}^r)^{-1}\Phi_\mathbf{x}^{r\top}\Phi_\mathbf{x}^m.
    \end{equation*}
    Transposing both sides yields
    \begin{equation*}
        A_* = \Phi_\mathbf{x}^{m\top}\Phi_\mathbf{x}^r(\Phi_\mathbf{x}^{r\top}\Phi_\mathbf{x}^r)^{-1}.
    \end{equation*}
    This completes the proof.
\end{proof}

The previous lemma gives an exact recovery result: if the empirical NTK feature matrix has rank at most $r$ and the last-layer features span an $r$-dimensional subspace, then the non-last-layer features are exactly representable as a linear combination of the last-layer features. In this regime, the proposed projection introduces no approximation error on the data. This motivates checking whether the empirical NTK has low effective rank in the models used in our experiments.

Figures~\ref{fig:mlp_ntk_spectrum} and~\ref{fig:cnn_ntk_spectrum} report the spectrum of the empirical NTK Gram matrix for the MLP used in the UCI regression experiments and the CNN used in the CIFAR-10 experiments, respectively. In both cases, the spectrum exhibits strong concentration, i.e., a relatively small number of leading eigenvalues accounts for most of the trace. This supports the premise that, in these models, the empirical NTK contains a dominant low-dimensional structure.

However, spectral decay alone does not imply that this low-dimensional structure is fully captured by the span of the last-layer features. The exact recovery lemma above gives a sufficient condition for this to happen, but in practice the empirical NTK need not be exactly rank $r$, and the last-layer span may only approximately capture the relevant directions. We therefore introduce a relative quasi-low-rank residual that directly measures the part of the full eNTK feature matrix that lies outside the span of the last-layer features. The next definition formalizes this approximate setting.

\begin{figure}[h!]
    \centering
    \includegraphics[width=0.89\linewidth]{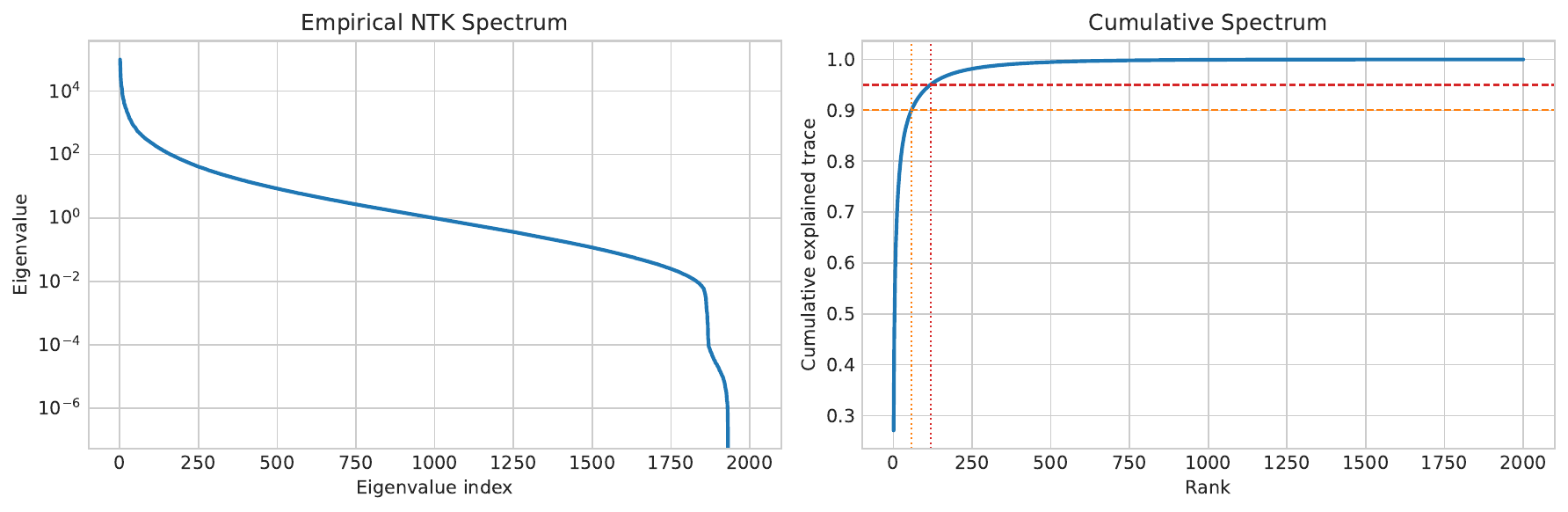}
    \caption{Empirical NTK spectrum for the MLP used in the UCI regression experiments. \textbf{Left:} eigenvalues of the empirical NTK Gram matrix in decreasing order. \textbf{Right:} cumulative fraction of the NTK trace explained by the leading eigenvalues. The fast spectral decay indicates strong concentration of the NTK in a relatively low-dimensional subspace, motivating our low-rank feature approximation.}
    \label{fig:mlp_ntk_spectrum}
\end{figure}

\begin{figure}[h!]
    \centering
    \includegraphics[width=0.9\linewidth]{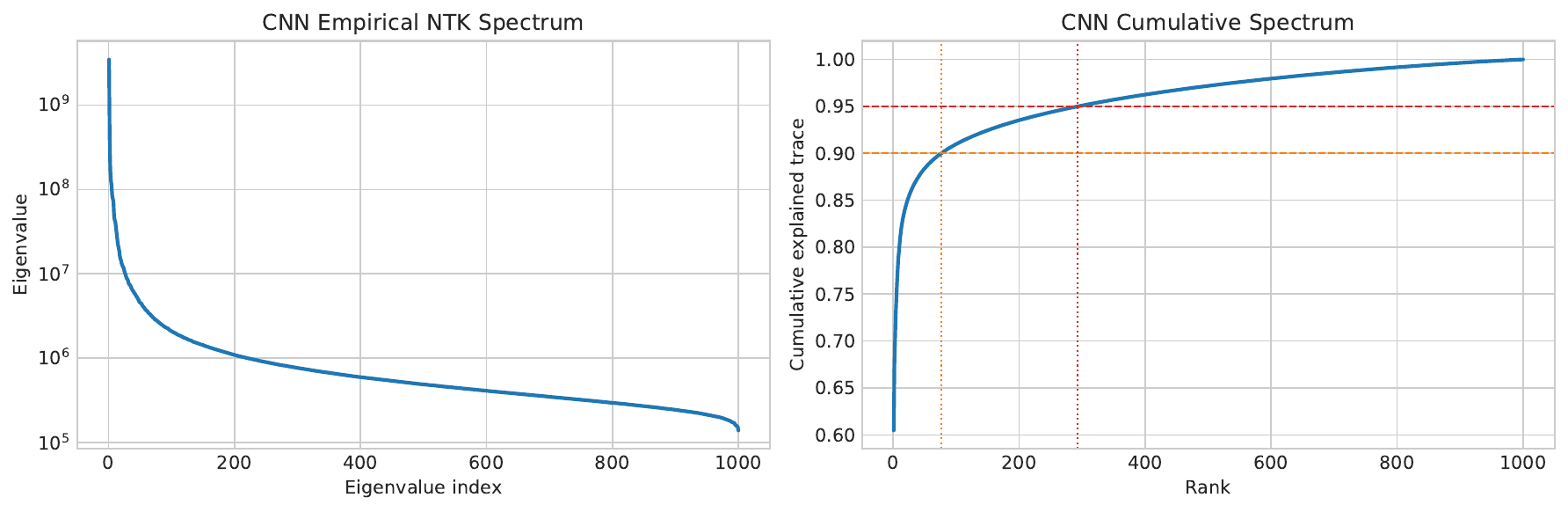}
    \caption{Empirical NTK spectrum for the CNN used in the CIFAR-10 classification experiments. \textbf{Left:} eigenvalues of the empirical NTK Gram matrix in decreasing order. \textbf{Right:} cumulative explained trace as a function of rank. As in the regression setting, most of the trace is captured by a small subset of directions, supporting our low-rank NTK approximation in image classification.\vspace{-0.2in}}
    \label{fig:cnn_ntk_spectrum}
\end{figure}

% In practice, however, the empirical NTK is typically not exactly rank $r$, but may still be close to the span of the last-layer features. 

\begin{definition}[Relative quasi low-rank residual]
\label{def: relative_quasi_low_rank}
Let $\Phi_{\mathbf{x}}^p = [\Phi_{\mathbf{x}}^m,\Phi_{\mathbf{x}}^r] \in \mathbb{R}^{N\times (m+r)}$ be the empirical NTK feature matrix, and assume that $\Phi_{\mathbf{x}}^r$ has full column rank $r$. Define the orthogonal projector onto the column space of $\Phi_{\mathbf{x}}^r$ by
\begin{equation*}
P_r := \Phi_{\mathbf{x}}^r(\Phi_{\mathbf{x}}^{r\top}\Phi_{\mathbf{x}}^r)^{-1}\Phi_{\mathbf{x}}^{r\top} \in \mathbb{R}^{N\times N}.
\end{equation*}
We define the relative quasi low-rank residual by
\begin{equation*}
\varepsilon_{\mathbf{x}} := \lVert(I_N - P_r)\Phi_{\mathbf{x}}^p\rVert_2.
\end{equation*}
Equivalently, since $(I_N - P_r)\Phi_{\mathbf{x}}^r = 0$, we have
\begin{equation*}
\varepsilon_{\mathbf{x}} = \lVert(I_N - P_r)\Phi_{\mathbf{x}}^m\rVert_2.
\end{equation*}
\end{definition}

The quantity $\varepsilon_{\mathbf{x}}$ measures how much of the full eNTK feature matrix lies outside the span of the last-layer features on the training set. When $\varepsilon_{\mathbf{x}}=0$, we recover the exact setting of the previous lemma. For $\varepsilon_{\mathbf{x}}>0$, it quantifies the smallest possible feature approximation error attainable by a linear predictor based on the last-layer features.

Since our method uses the least-squares map from $\Phi_{\mathbf{x}}^r$ to $\Phi_{\mathbf{x}}^m$, the relevant question is whether this map achieves the residual above. In the next proposition we show that it does. The feature approximation error induced by the learned linear map is exactly equal to the relative quasi low-rank residual.

\begin{proposition}%[Best linear predictor attains the projection residual]
\label{prop: projection_residual_attained}
Let $\widehat A := \Phi_{\mathbf{x}}^{m\top}\Phi_{\mathbf{x}}^r(\Phi_{\mathbf{x}}^{r\top}\Phi_{\mathbf{x}}^r)^{-1} \in \mathbb{R}^{m\times r}$ be the least-squares estimator, and define $\Phi_{\mathbf{x}}^B := \Phi_{\mathbf{x}}^r \widehat B^\top$ with $\widehat B := \begin{pmatrix}\widehat A\\ I_r\end{pmatrix}$. Then
\begin{equation*}
\lVert\Phi_{\mathbf{x}}^p - \Phi_{\mathbf{x}}^B\rVert_2 = \varepsilon_{\mathbf{x}}.
\end{equation*}
In particular, the training feature approximation error is exactly equal to the relative quasi low-rank residual.
\end{proposition}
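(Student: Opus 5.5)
The plan is to compute $\Phi_{\mathbf{x}}^B$ explicitly in block form and show that it coincides with $P_r\Phi_{\mathbf{x}}^p$. The proposition then follows from the definition of $\varepsilon_{\mathbf{x}}$.

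First, using $\widehat B^\top = [\widehat A^\top, I_r]$, we write
\begin{equation*}
\Phi_{\mathbf{x}}^B = \Phi_{\mathbf{x}}^r \widehat B^\top = [\Phi_{\mathbf{x}}^r\widehat A^\top,\ \Phi_{\mathbf{x}}^r].
\end{equation*}
Substituting the least-squares estimator gives
\begin{equation*}
\Phi_{\mathbf{x}}^r\widehat A^\top = \Phi_{\mathbf{x}}^r(\Phi_{\mathbf{x}}^{r\top}\Phi_{\mathbf{x}}^r)^{-1}\Phi_{\mathbf{x}}^{r\top}\Phi_{\mathbf{x}}^m = P_r\Phi_{\mathbf{x}}^m,
\end{equation*}
where the inverse exists by the full-column-rank assumption of Definition~\ref{def: relative_quasi_low_rank}.

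Next, since $P_r$ is the orthogonal projector onto $\mathrm{col}(\Phi_{\mathbf{x}}^r)$, we have $P_r\Phi_{\mathbf{x}}^r=\Phi_{\mathbf{x}}^r$. This can also be checked directly by cancelling $(\Phi_{\mathbf{x}}^{r\top}\Phi_{\mathbf{x}}^r)^{-1}\Phi_{\mathbf{x}}^{r\top}\Phi_{\mathbf{x}}^r=I_r$. Hence
\begin{equation*}
\Phi_{\mathbf{x}}^B = [P_r\Phi_{\mathbf{x}}^m,\ P_r\Phi_{\mathbf{x}}^r] = P_r\Phi_{\mathbf{x}}^p.
\end{equation*}
Therefore
\begin{equation*}
\Phi_{\mathbf{x}}^p-\Phi_{\mathbf{x}}^B=(I_N-P_r)\Phi_{\mathbf{x}}^p,
\end{equation*}
and taking spectral norms yields exactly $\varepsilon_{\mathbf{x}}$ by Definition~\ref{def: relative_quasi_low_rank}.

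The difference has zero last block, so equivalently $\lVert\Phi_{\mathbf{x}}^p-\Phi_{\mathbf{x}}^B\rVert_2=\lVert(I_N-P_r)\Phi_{\mathbf{x}}^m\rVert_2$, which matches the alternate form in the definition. There is no real obstacle here; the only care needed is bookkeeping of transposes in $\widehat A^\top$ and noting that the column-block structure is preserved when $P_r$ is applied blockwise.
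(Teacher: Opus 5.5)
Your proposal is correct and follows the paper's proof essentially step for step: both compute $\Phi_{\mathbf{x}}^r\widehat A^\top = P_r\Phi_{\mathbf{x}}^m$ and then read off the residual from the block structure. The only cosmetic difference is that you write the result as $\Phi_{\mathbf{x}}^B = P_r\Phi_{\mathbf{x}}^p$ and use the first form of the definition of $\varepsilon_{\mathbf{x}}$, whereas the paper writes the difference as $[(I_N-P_r)\Phi_{\mathbf{x}}^m,\ 0]$ and uses the equivalent second form.
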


\begin{proof}
By definition of $\widehat A$,
\begin{equation*}
\widehat A^\top = (\Phi_{\mathbf{x}}^{r\top}\Phi_{\mathbf{x}}^r)^{-1}\Phi_{\mathbf{x}}^{r\top}\Phi_{\mathbf{x}}^m.
\end{equation*}
Hence
\begin{equation*}
\Phi_{\mathbf{x}}^r \widehat A^\top
=
\Phi_{\mathbf{x}}^r(\Phi_{\mathbf{x}}^{r\top}\Phi_{\mathbf{x}}^r)^{-1}\Phi_{\mathbf{x}}^{r\top}\Phi_{\mathbf{x}}^m
=
P_r \Phi_{\mathbf{x}}^m.
\end{equation*}
Therefore
\begin{equation*}
\Phi_{\mathbf{x}}^m - \Phi_{\mathbf{x}}^r \widehat A^\top
=
(I_N - P_r)\Phi_{\mathbf{x}}^m.
\end{equation*}
Since
\begin{equation*}
\Phi_{\mathbf{x}}^p - \Phi_{\mathbf{x}}^B
=
[\Phi_{\mathbf{x}}^m,\Phi_{\mathbf{x}}^r] -
[\Phi_{\mathbf{x}}^r\widehat A^\top,\Phi_{\mathbf{x}}^r]
=
[\Phi_{\mathbf{x}}^m - \Phi_{\mathbf{x}}^r\widehat A^\top, 0],
\end{equation*}
we obtain
\begin{equation*}
\lVert\Phi_{\mathbf{x}}^p - \Phi_{\mathbf{x}}^B\rVert_2
=
\lVert\Phi_{\mathbf{x}}^m - \Phi_{\mathbf{x}}^r\widehat A^\top\rVert_2
=
\lVert(I_N - P_r)\Phi_{\mathbf{x}}^m\rVert_2
=
\varepsilon_{\mathbf{x}}.
\end{equation*}
This proves the claim.
\end{proof}

This result shows that the approximation error of the learned map is controlled by a geometric property of the full feature matrix, namely its distance to the span of the last-layer features. To understand the effect on uncertainty estimation, it remains to show how this feature-level error propagates to the posterior covariance. To do so, let $\Phi_{\mathbf{x}'}^B$ denote the approximate test feature matrix induced by the same learned map $\widehat A$, and define
\begin{equation*}
\varepsilon_{\mathbf{x}'} := \lVert\Phi_{\mathbf{x}'}^p - \Phi_{\mathbf{x}'}^B\rVert_2.
\end{equation*}

We now compare the predictive covariance computed from the full eNTK features with that obtained from the projected features. The next result shows that if the train and test feature approximation errors are small, then the resulting predictive covariance error is also small.

\begin{proposition}%[Posterior stability under relative quasi low-rank approximation]
\label{prop: posterior_stability_relative_quasi_low_rank}
Let $\Phi_{\mathbf{x}}^B$ and $\Phi_{\mathbf{x}'}^B$ be induced by the least-squares map $\widehat A$ above. Then
\begin{equation*}
\lVert S_{\mathbf{x}',\mathbf{x}'}^{\ntk} - S_{\mathbf{x}',\mathbf{x}'}^{B}\rVert_2
\le
\bigl(\lVert\Phi_{\mathbf{x}'}^p\rVert_2 + \lVert\Phi_{\mathbf{x}'}^B\rVert_2\bigr)\varepsilon_{\mathbf{x}'}
+
\frac{\lVert\Phi_{\mathbf{x}'}^p\rVert_2 \lVert\Phi_{\mathbf{x}'}^B\rVert_2}{\sigma^2}
\bigl(\lVert\Phi_{\mathbf{x}}^p\rVert_2 + \lVert\Phi_{\mathbf{x}}^B\rVert_2\bigr)\varepsilon_{\mathbf{x}}.
\end{equation*}
Thus if both the training residual $\varepsilon_{\mathbf{x}}$ and the test residual $\varepsilon_{\mathbf{x}'}$ are small, the predictive covariance induced by the projected features is close to that of the full eNTK model.
\end{proposition}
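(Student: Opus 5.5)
We use the kernel (data-space) form of both covariances and split the difference into a prior term and a data-fit term. Write $\Phi' := \Phi^p_{\mathbf{x}'}$, $\Phi'_B := \Phi^B_{\mathbf{x}'}$, $\Phi := \Phi^p_{\mathbf{x}}$, $\Phi_B := \Phi^B_{\mathbf{x}}$. Then
\begin{equation*}
S^{\ntk}_{\mathbf{x}',\mathbf{x}'} = \Phi'\Phi'^{\top} - \Phi'\Phi^{\top}(\Phi\Phi^{\top}+\sigma^2 I_N)^{-1}\Phi\Phi'^{\top},
\end{equation*}
and similarly for $S^B$ with $B$-features. The kernel form is convenient, but we immediately switch to the weight-space form via the push-through identity, as in the proof of Theorem \ref{theorem: Woodbury NNGP}:
\begin{equation*}
S^{\ntk}_{\mathbf{x}',\mathbf{x}'} = \Phi' M \Phi'^{\top}, \qquad M := \left(\tfrac{1}{\sigma^2}\Phi^{\top}\Phi + I_p\right)^{-1}.
\end{equation*}
Likewise $S^B_{\mathbf{x}',\mathbf{x}'} = \Phi'_B M_B \Phi_B'^{\top}$, with $M_B$ defined from $\Phi_B$. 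Both $M$ and $M_B$ are $p\times p$, satisfy $0\preceq M, M_B\preceq I_p$, and so have operator norm at most $1$.

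We telescope:
\begin{equation*}
\Phi' M\Phi'^{\top} - \Phi'_B M_B\Phi_B'^{\top} = (\Phi'-\Phi'_B)M\Phi'^{\top} + \Phi'_B M(\Phi'-\Phi'_B)^{\top} + \Phi'_B(M-M_B)\Phi'^{\top}.
\end{equation*}
The first two terms have norms at most $\varepsilon_{\mathbf{x}'}\lVert\Phi'\rVert_2$ and $\lVert\Phi'_B\rVert_2\,\varepsilon_{\mathbf{x}'}$, since $\lVert M\rVert_2\le1$. Together they give the first summand of the claimed bound. The third term is bounded by $\lVert\Phi'_B\rVert_2\lVert\Phi'\rVert_2\lVert M-M_B\rVert_2$, so it remains to show
\begin{equation*}
\lVert M-M_B\rVert_2 \le \sigma^{-2}\bigl(\lVert\Phi\rVert_2+\lVert\Phi_B\rVert_2\bigr)\varepsilon_{\mathbf{x}}.
\end{equation*}

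For this we use the resolvent identity $M - M_B = M\,\sigma^{-2}(\Phi_B^{\top}\Phi_B - \Phi^{\top}\Phi)\,M_B$. We then write
\begin{equation*}
\Phi_B^{\top}\Phi_B-\Phi^{\top}\Phi = \Phi_B^{\top}(\Phi_B-\Phi) + (\Phi_B-\Phi)^{\top}\Phi.
\end{equation*}
Bounding $\lVert M\rVert_2,\lVert M_B\rVert_2\le 1$ and $\lVert\Phi-\Phi_B\rVert_2 = \varepsilon_{\mathbf{x}}$ gives exactly the required estimate. The last equality is Proposition \ref{prop: projection_residual_attained}, or simply the definition of $\varepsilon_{\mathbf{x}}$ for the least-squares map.

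The main subtlety is making sure the weight-space representation is legitimate when $\Phi^B$ lives in $\mathbb{R}^{m+r}$ with the same ambient dimension $p$ as $\Phi^p$. It is, because $\phi^B(x)=\widehat B\phi^r(x)\in\mathbb{R}^{p}$, so both models share the identity prior on $\mathbb{R}^p$ and the push-through identity applies verbatim. A sharper bound could exploit $\lVert M\rVert_2\le 1$ more carefully, but the crude bounds already yield the stated inequality.
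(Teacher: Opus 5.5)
Your route is the paper's own: the weight-space form, a three-term telescope, $\lVert M\rVert_2,\lVert M_B\rVert_2\le 1$, the resolvent identity, and the split of $\Phi_B^{\top}\Phi_B-\Phi^{\top}\Phi$. However, the telescoping identity you display is false as written. Expanding your three terms gives $\Phi'M\Phi'^{\top}-\Phi'_BM\Phi_B'^{\top}+\Phi'_B(M-M_B)\Phi'^{\top}$. This differs from the target $\Phi'M\Phi'^{\top}-\Phi'_BM_B\Phi_B'^{\top}$ by $\Phi'_B(M-M_B)(\Phi'-\Phi'_B)^{\top}$.

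The problem is the order of the replacements. Your first two terms move from $(\Phi',M,\Phi')$ to $(\Phi'_B,M,\Phi'_B)$. The final $M\to M_B$ step would then need $\Phi_B'^{\top}$ on the right, not $\Phi'^{\top}$. That version is a valid identity, but it produces the factor $\lVert\Phi'_B\rVert_2^2$ instead of $\lVert\Phi'\rVert_2\lVert\Phi'_B\rVert_2$, so it does not give the stated bound.

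The fix is to swap $M$ for $M_B$ before changing the right-hand factor:
\[
\Phi'M\Phi'^{\top}-\Phi'_BM_B\Phi_B'^{\top}=(\Phi'-\Phi'_B)M\Phi'^{\top}+\Phi'_B(M-M_B)\Phi'^{\top}+\Phi'_BM_B(\Phi'-\Phi'_B)^{\top}.
\]
This is exactly the paper's decomposition. Since $\lVert M_B\rVert_2\le 1$ as well, your term-by-term bounds carry over unchanged. The rest of your argument is correct: the legitimacy of the weight-space form, the resolvent identity, and $\lVert\Phi-\Phi_B\rVert_2=\varepsilon_{\mathbf{x}}$. With the corrected telescope, it yields precisely the stated inequality.
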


\begin{proof}
Define
\begin{equation*}
M_p := I_{m+r} + \frac{1}{\sigma^2}\Phi_{\mathbf{x}}^{p\top}\Phi_{\mathbf{x}}^p,
\qquad
M_B := I_{m+r} + \frac{1}{\sigma^2}\Phi_{\mathbf{x}}^{B\top}\Phi_{\mathbf{x}}^B.
\end{equation*}
Then
\begin{equation*}
S_{\mathbf{x}',\mathbf{x}'}^{\ntk}
=
\Phi_{\mathbf{x}'}^p M_p^{-1}\Phi_{\mathbf{x}'}^{p\top},
\qquad
S_{\mathbf{x}',\mathbf{x}'}^{B}
=
\Phi_{\mathbf{x}'}^B M_B^{-1}\Phi_{\mathbf{x}'}^{B\top}.
\end{equation*}
Subtracting and adding intermediate terms gives
\begin{equation*}
S_{\mathbf{x}',\mathbf{x}'}^{\ntk} - S_{\mathbf{x}',\mathbf{x}'}^{B}
=
(\Phi_{\mathbf{x}'}^p - \Phi_{\mathbf{x}'}^B)M_p^{-1}\Phi_{\mathbf{x}'}^{p\top}
+
\Phi_{\mathbf{x}'}^B(M_p^{-1} - M_B^{-1})\Phi_{\mathbf{x}'}^{p\top}
+
\Phi_{\mathbf{x}'}^B M_B^{-1}(\Phi_{\mathbf{x}'}^{p\top} - \Phi_{\mathbf{x}'}^{B\top}).
\end{equation*}
Taking operator norms and using $M_p \succeq I_{m+r}$ and $M_B \succeq I_{m+r}$, hence $\lVert M_p^{-1}\rVert_2 \le 1$ and $\lVert M_B^{-1}\rVert_2 \le 1$, yields
\begin{equation*}
\lVert S_{\mathbf{x}',\mathbf{x}'}^{\ntk} - S_{\mathbf{x}',\mathbf{x}'}^{B}\rVert_2
\le
\bigl(\lVert\Phi_{\mathbf{x}'}^p\rVert_2 + \lVert\Phi_{\mathbf{x}'}^B\rVert_2\bigr)\varepsilon_{\mathbf{x}'}
+
\lVert\Phi_{\mathbf{x}'}^p\rVert_2 \lVert\Phi_{\mathbf{x}'}^B\rVert_2 \lVert M_p^{-1} - M_B^{-1}\rVert_2.
\end{equation*}

Next we use the identity $A^{-1} - B^{-1} = A^{-1}(B-A)B^{-1}$, valid for any invertible matrices $A$ and $B$. Applying it with $A=M_p$ and $B=M_B$ gives
\begin{equation*}
M_p^{-1} - M_B^{-1} = M_p^{-1}(M_B - M_p)M_B^{-1}.
\end{equation*}
Therefore
\begin{equation*}
\lVert M_p^{-1} - M_B^{-1}\rVert_2
\le
\lVert M_p^{-1}\rVert_2 \lVert M_B - M_p\rVert_2 \lVert M_B^{-1}\rVert_2
\le
\lVert M_B - M_p\rVert_2.
\end{equation*}

Moreover,
\begin{equation*}
M_B - M_p
=
\frac{1}{\sigma^2}\bigl(\Phi_{\mathbf{x}}^{B\top}\Phi_{\mathbf{x}}^B - \Phi_{\mathbf{x}}^{p\top}\Phi_{\mathbf{x}}^p\bigr),
\end{equation*}
and
\begin{equation*}
\Phi_{\mathbf{x}}^{B\top}\Phi_{\mathbf{x}}^B - \Phi_{\mathbf{x}}^{p\top}\Phi_{\mathbf{x}}^p
=
\Phi_{\mathbf{x}}^{B\top}(\Phi_{\mathbf{x}}^B - \Phi_{\mathbf{x}}^p)
+
(\Phi_{\mathbf{x}}^B - \Phi_{\mathbf{x}}^p)^\top \Phi_{\mathbf{x}}^p.
\end{equation*}
Hence
\begin{equation*}
\lVert\Phi_{\mathbf{x}}^{B\top}\Phi_{\mathbf{x}}^B - \Phi_{\mathbf{x}}^{p\top}\Phi_{\mathbf{x}}^p\rVert_2
\le
\bigl(\lVert\Phi_{\mathbf{x}}^p\rVert_2 + \lVert\Phi_{\mathbf{x}}^B\rVert_2\bigr)\varepsilon_{\mathbf{x}},
\end{equation*}
and thus
\begin{equation*}
\lVert M_p^{-1} - M_B^{-1}\rVert_2
\le
\frac{1}{\sigma^2}
\bigl(\lVert\Phi_{\mathbf{x}}^p\rVert_2 + \lVert\Phi_{\mathbf{x}}^B\rVert_2\bigr)\varepsilon_{\mathbf{x}}.
\end{equation*}
Substituting this into the earlier bound proves the result.
\end{proof}

The previous proposition shows that the error in the predictive covariance is controlled by two quantities: the training residual $\varepsilon_{\mathbf{x}}$, which measures how well the projected features reproduce the training covariance, and the test residual $\varepsilon_{\mathbf{x}'}$, which measures how well the same map generalizes to the test points. The dependence on $1/\sigma^2$ is also natural: when the observation noise is small, the posterior becomes more sensitive to errors in the training covariance.

% Combining Propositions~\ref{prop: projection_residual_attained} and~\ref{prop: posterior_stability_relative_quasi_low_rank}
Combining Lemma~\ref{lemma: exactness} and Proposition~\ref{prop: posterior_stability_relative_quasi_low_rank}, we obtain the following picture. If the full eNTK feature matrix lies exactly in the span of the last-layer features, then the learned linear map recovers the exact feature representation and the posterior covariance is exact. If the full eNTK feature matrix is only approximately contained in that span, then the projection residual $\varepsilon_{\mathbf{x}}$ controls the train-side feature error, and the resulting predictive covariance differs from the full eNTK posterior by an amount controlled by $\varepsilon_{\mathbf{x}}$ and the corresponding test residual $\varepsilon_{\mathbf{x}'}$.

Our spectral experiments support our interpretation. We observe strong eigenvalue decay in the empirical NTK, with fewer than $r$ directions explaining most of the trace. This indicates that the eNTK is effectively low-rank, and therefore suggests that the residuals $\varepsilon_{\mathbf{x}}$ and $\varepsilon_{\mathbf{x}'}$ should be small in practice.

\section{Efficient Computation of the Feature Transform $L$}
\label{app: Numerical Implementation}
In this section we show how to efficiently compute the lower triangular $L \in \mathbb{R}^{r\times r}$ that forms the Cholesky decomposition of 
\begin{equation}
    B^{\top}B = 
    A^{\top}A + I_r 
    = \left(\Phi_{\mathbf{x}}^{r} (\Phi_{\mathbf{x}}^{r\top} \Phi_{\mathbf{x}}^r)^{-1} \right)^{\top}\Phi_{\mathbf{x}}^m \Phi_{\mathbf{x}}^{m\top} \left(\Phi_{\mathbf{x}}^{r} (\Phi_{\mathbf{x}}^{r\top} \Phi_{\mathbf{x}}^r)^{-1} \right) + I_r.
\end{equation}
Remember that  $\Phi_{\mathbf{x}}^r \in \mathbb{R}^{N\times r}$ and $\Phi_{\mathbf{x}}^m \in \mathbb{R}^{N\times m}$.  In our application, $m > N>r$.

First, we compute $\Phi_{\mathbf{x}}^{r} (\Phi_{\mathbf{x}}^{r\top} \Phi_{\mathbf{x}}^r)^{-1} \in \mathbb{R}^{N\times r}$ in $O(Nr^2 + r^3)$ time and $O(Nr + r^2)$ memory in 3 steps: Compute $\Phi_{\mathbf{x}}^{r\top} \Phi_{\mathbf{x}}^r \in \mathbb{R}^{r\times r}$. Next, compute the Cholesky decomposition of $\Phi_{\mathbf{x}}^{r\top}\Phi_{\mathbf{x}}^r \in \mathbb{R}^{r\times r}$. Finally, compute $\Phi_{\mathbf{x}}^{r} (\Phi_{\mathbf{x}}^{r\top} \Phi_{\mathbf{x}}^r)^{-1} \in \mathbb{R}^{N\times r}$ by two triangular solves. Subsampling replaces $N$ with $k$.

Alternatively, we could compute $\Phi_{\mathbf{x}}^{r} (\Phi_{\mathbf{x}}^{r\top} \Phi_{\mathbf{x}}^r)^{-1} \in \mathbb{R}^{N\times r}$ by first forming the QR decomposition $\Phi_{\mathbf{x}}^r = QR$, and then using $\Phi_{\mathbf{x}}^{r} (\Phi_{\mathbf{x}}^{r\top} \Phi_{\mathbf{x}}^r)^{-1} = QR^{-\top}$. 
% We found this to be less stable during the numerical experiments.

It is impossible to store the parameter-Jacobian $\Phi_{\mathbf{x}}^m \in \mathbb{R}^{N\times m}$ for very large models. It is also infeasible to compute and store each of the entries $\phi^m(\mathbf{x}_i)^{\top} \phi^m(\mathbf{x}_j)$ of the Gram matrix $\Phi_{\mathbf{x}}^m \Phi_{\mathbf{x}}^{m\top} \in \mathbb{R}^{N\times N}$. To circumvent this, we follow \citet{park2023trak}: Consider a random matrix $P \sim \mathcal{N}(0,q^{-1})^{m\times q}$ with $q \ll m$. This preserves the inner product with high probability,
\begin{equation}
    \Phi_{\mathbf{x}}^m P P^{\top} \Phi_{\mathbf{x}}^{m\top} \approx \Phi_{\mathbf{x}}^m \Phi_{\mathbf{x}}^{m\top}.
\end{equation}
Using this we may compute the Jacobian for minibatches out of $N$, and directly store the result in $\Phi_{\mathbf{x}}^m P \in \mathbb{R}^{N\times q}$. We use the highly optimized implementation by \citet{park2023trak} that generates the projection coefficients of $P\in\mathbb{R}^{m\times q}$ as needed instead of storing them. 
% The time-complexity of computing $\Phi_{\mathbf{x}}^mP$ is $O(Nmq)$.
This shows how to compute $\left(\Phi_{\mathbf{x}}^{r} (\Phi_{\mathbf{x}}^{r\top} \Phi_{\mathbf{x}}^r)^{-1} \right)^{\top}\Phi_{\mathbf{x}}^m P \in \mathbb{R}^{r\times q}$, and thus approximate $B^{\top}B \in \mathbb{R}^{r\times r}$. Given $B^{\top}B$, finding the Cholesky decomposition $L\in \mathbb{R}^{r\times r}$ is $O(r^3)$. This is a one-time implementation and does not need to be repeated for every test point.

\section{Further Details on the Experimental Setup}\label{app: exp-setup}

\subsection{Regression}

For each dataset and seed, we use a fixed three‑way split of $72\%/18\%/10\%$ into train/validation/test. Inputs are standardized using the training‑set mean and standard deviation, and targets are centered by subtracting the training set mean. The same split and preprocessing are used across all methods to ensure comparability.

All methods that rely on a neural network backbone use the same MLP architecture for regression: two hidden layers with width $50$, ReLU activations, and a scalar output. They are trained using mean-squared error loss with a learning rate of $10^{-3}$ and gradient clipping of $1$ with the AdamW optimizer, adapted to $\mu$P. The maximum number of training epochs is dataset-specific (\textit{Boston}/\textit{Concrete}/\textit{Power}: $3000$; \textit{Energy}: $2000$; \textit{Wine}: $1000$). Batch size is $32$ for all datasets except for \textit{Power}, where we use batch size $256$. Model selection is performed based on validation RMSE, evaluated every $10$ epochs. After selecting the optimal number of epochs, the backbone is retrained from scratch on the combined train and validation sets for exactly that number of epochs, and final results are reported on the held-out test set.

We treat predictive uncertainty as a Gaussian with mean from the backbone and variance from the post‑hoc method. When a method requires the observation‑noise variance, we either estimate it by validation‑set MSE (empirical noise) or sweep a small grid and pick the best validation NLL.

Our method uses a fixed, trained backbone and constructs a feature‑space Gaussian posterior. We form two feature sets at the training inputs: (i) last‑layer (NNGP/BLL) features and (ii) all‑but‑last‑layer Jacobian features. We fit a linear feature transform that best expresses the latter in the span of the former, and use it to define transformed features $L^\top \phi^r(x)$ that approximate the empirical NTK. We then perform standard Bayesian linear regression in feature space using these transformed features, yielding a predictive covariance that captures epistemic uncertainty. The subsampled variant builds the feature transform and posterior from a random $40\%$ subset of the training set (fixed per dataset and seed).
% When subsampling, we rescale second‑moment estimates by $N/k$, as explained in Section \ref{sec: method} to match the full‑data scale.

\subsection{Contextual Bandits}

We use the \textit{Wheel} Bandit benchmark with 2‑D contexts sampled uniformly from the unit disk and 5 actions. Difficulty is controlled by ($\delta \in {0.5, 0.7, 0.9, 0.95, 0.99}$). Rewards are Gaussian with fixed variance, and each run consists of alternating phases of environment interaction and model updates. All neural methods use the same action‑conditioned MLP: inputs are the observation $x$ and the one-hot encoded action. They consist of two hidden layers of width $100$ with ReLU, scalar output, and $\mu$‑parameterization.

We train with MSE using the AdamW optimizer, learning rate $3\times10^{-3}$, and gradient clipping at $1.0$. Each update phase uses batch size $512$, with $100$ gradient steps per update and $20$ environment steps per phase, for a total of $80{,}000$ gradient steps ($16{,}000$ environment steps). We initialize with $3$ warm‑start pulls per action before applying Thompson sampling. Thompson sampling is used throughout all the experiments. For each action, we sample a reward from its predictive distribution and select the action with the highest sample. Simple regret is evaluated using the posterior mean.

We maintain a feature‑space Gaussian posterior derived from the fixed backbone. The posterior is rebuilt at a regular cadence (every gradient step in the default setting), and we optionally subsample the data used to build the posterior (i.e., when we use Rich-BLL (S)). When subsampling, we ensure the subset size is at least the feature dimension and rescale second‑moment statistics to match the full‑data scale. We also support an empirical noise option that updates the observation‑noise variance online using a moving window of recent squared prediction errors; this updated variance is then used in posterior updates.

\subsection{Image Classification}

We create a held‑out validation split from the \textit{CIFAR‑10} training set and use a test set for evaluation. For OOD, we use \textit{SVHN} and \textit{CIFAR‑100} test sets as out‑of‑distribution datasets. Input images are normalized with standard dataset statistics.

All post-hoc methods share the same backbone architecture and training protocol per run. We use a CNN with seven $3\times 3$ convolutions, batch normalization and ReLU after each layer, two $2\times 2$ max-pooling layers, global average pooling, and a final linear readout. With base width $128$, the channel dimensions are $3\rightarrow 128\rightarrow 128\rightarrow 256\rightarrow 256\rightarrow 256\rightarrow 512\rightarrow 512$, with pooling applied after the second and fifth convolutions. The final $512$-dimensional feature vector is passed through dropout with probability $0.2$ before the classifier. All convolutions use padding $1$ and no bias terms. This network has approximately $5\text{M}$ trainable parameters. The CNN is trained with cross‑entropy, $\mu$‑parameterization, and the AdamW optimizer (learning rate $10^{-3}$ and no weight decay), for $100$ epochs, with batch size $128$ in training and $256$ for evaluation, and a $10\%$ validation split. Further, training uses random crop and horizontal flips. We select the best checkpoint by validation accuracy and reuse the same trained backbone for all uncertainty methods to ensure fair comparison. We compute uncertainty at the logit level using a feature‑space Gaussian posterior. Penultimate features are extracted from the backbone, and we build a feature transform that aligns these features to logit‑level gradient features. Importantly, to build this transformation in a scalable way, we use the algorithm described in Appendix \ref{app: Numerical Implementation}. At test time we select the predicted class (or the true class for analysis), compute the logit‑level predictive variance via feature‑space GP inference, and use this variance as the OOD score. We also evaluate an NNGP variant that uses penultimate features without the transformation, and a last‑layer Laplace baseline with a shared precision. For Rich-BLL/NNGP/Laplace, we use a fixed noise variance of $0.1$, per‑class feature subsets of size $1024$. For this experiment, we use ridge ($\lambda=1.0$) to learn the transformation matrix, and the Laplace baseline uses ($\alpha=1.0$) with softmax weighting and no clamp. Predictive NLL and ECE are computed via Monte‑Carlo averaging with $50$ samples.

\section{Subsample Size Ablation}

\begin{table*}[h!]
\centering
\caption{Test Gaussian negative log-likelihood for Rich-BLL (S) under different subsampling ratios. The percentages indicate the portion of the dataset that was used for estimating the transformation matrix and for the posterior inference.}
\label{tab:uci_subsample_nll}
% \resizebox{0.6\textwidth}{!}{%
\begin{tabular}{lccccc}
\toprule
 & \textit{Boston} & \textit{Concrete} & \textit{Energy} & \textit{Power} & \textit{Wine} \\
\cmidrule(lr){2-6}
\textbf{Data used} & NLL ($\downarrow$) & NLL ($\downarrow$) & NLL ($\downarrow$) & NLL ($\downarrow$) & NLL ($\downarrow$) \\
\midrule
% 20\% & $2.76 \pm 0.17$ & $3.10 \pm 0.03$ & $0.79 \pm 0.04$ & $2.78 \pm 0.01$ & $1.03 \pm 0.01$ \\
30\% & $2.83 \pm 0.08$ & $3.10 \pm 0.03$ & $0.79 \pm 0.04$ & $2.78 \pm 0.01$ & $1.03 \pm 0.01$ \\
% 40\% & $2.78 \pm 0.18$ & $3.10 \pm 0.03$ & $0.79 \pm 0.04$ & $2.78 \pm 0.01$ & $1.03 \pm 0.01$ \\
50\% & $2.79 \pm 0.08$ & $3.10 \pm 0.03$ & $0.78 \pm 0.04$ & $2.78 \pm 0.01$ & $1.03 \pm 0.01$ \\
60\% & $2.79 \pm 0.08$ & $3.10 \pm 0.03$ & $0.78 \pm 0.04$ & $2.78 \pm 0.01$ & $1.03 \pm 0.01$ \\
70\% & $2.79 \pm 0.08$ & $3.10 \pm 0.03$ & $0.78 \pm 0.04$ & $2.78 \pm 0.01$ & $1.02 \pm 0.01$ \\
80\% & $2.79 \pm 0.11$ & $3.10 \pm 0.03$ & $0.78 \pm 0.04$ & $2.78 \pm 0.01$ & $1.01 \pm 0.01$ \\
90\% & $2.79 \pm 0.12$ & $3.10 \pm 0.03$ & $0.78 \pm 0.04$ & $2.78 \pm 0.01$ & $1.01 \pm 0.01$ \\
\bottomrule
\end{tabular}
% }
\end{table*}

% \begin{table*}[t]
% \centering
% \caption{Simple regret on the Wheel bandit benchmark, averaged over 10 random seeds and reported as mean $\pm$ standard error. Results are shown for increasing difficulty levels $\delta$. Lower values indicate better exploration performance.}
% \label{tab:wheel_bandit_simple}
% \begin{tabular}{l|ccccc}
% % \hline
%  & $\delta = 0.5$ & $\delta = 0.7$ & $\delta = 0.9$ & $\delta = 0.95$ & $\delta = 0.99$ \\
% \hline
% Rich-BLL        & $0.00 \pm 0.00$ & $0.00 \pm 0.00$ & $0.00 \pm 0.00$ & $0.00 \pm 0.00$ & $0.00 \pm 0.00$ \\
% Rich-BLL (S)    & $0.00 \pm 0.00$ & $0.00 \pm 0.00$ & $0.00 \pm 0.00$ & $0.00 \pm 0.00$ & $0.00 \pm 0.00$ \\
% NNGP (BLL)        & $0.00 \pm 0.00$ & $0.00 \pm 0.00$ & $0.00 \pm 0.00$ & $0.00 \pm 0.00$ & $0.00 \pm 0.00$ \\
% VBLL              & $0.00 \pm 0.00$ & $0.00 \pm 0.00$ & $0.00 \pm 0.00$ & $0.00 \pm 0.00$ & $0.00 \pm 0.00$ \\
% NeuralLinear      & $0.00 \pm 0.00$ & $0.00 \pm 0.00$ & $0.00 \pm 0.00$ & $0.00 \pm 0.00$ & $0.00 \pm 0.00$ \\
% NeuralLinear-MR   & $0.00 \pm 0.00$ & $0.00 \pm 0.00$ & $0.00 \pm 0.00$ & $0.00 \pm 0.00$ & $0.00 \pm 0.00$ \\
% % LinDiagPost       & $0.00 \pm 0.00$ & $0.00 \pm 0.00$ & $0.00 \pm 0.00$ & $0.00 \pm 0.00$ & $0.00 \pm 0.00$ \\
% \hline
% \end{tabular}
% \end{table*}

%%%%%%%%%%%%%%%%%%%%%%%%%%%%%%%%%%%%%%%%%%%%%%%%%%%%%%%%%%%%%%%%%%%%%%%%%%%%%%%
%%%%%%%%%%%%%%%%%%%%%%%%%%%%%%%%%%%%%%%%%%%%%%%%%%%%%%%%%%%%%%%%%%%%%%%%%%%%%%%

\end{document}